\newcommand{\klb}[1]{}
\newcommand{\ys}[1]{}
\newcommand{\longver}[1]{#1}
\providecommand{\longver}[1]{}
\providecommand{\shortver}[1]{#1}
\providecommand{\includeuncertainty}[1]{}
\providecommand{\withtime}[1]{#1}
\def\eqref#1{equation~\ref{#1}}
\def\1{\bm{1}}
\def\ra{{\textnormal{a}}}
\DeclareMathAlphabet{\mathsfit}{\encodingdefault}{\sfdefault}{m}{sl}
\SetMathAlphabet{\mathsfit}{bold}{\encodingdefault}{\sfdefault}{bx}{n}
\newtheorem{definition}{Definition}
\newtheorem{procedure}[definition]{Procedure}
\newtheorem{theorem}{Theorem}
\newtheorem{lemma}{Lemma}
\numberwithin{equation}{section}
\numberwithin{definition}{section}
\setlist[description]{style=unboxed}
\newcommand{\citeyearp}[1]{[\citeyear{#1}]}
\newcommand{\commentout}[1]{}
\renewcommand{\eqref}[1]{(\ref{#1})}
\providecommand{\klb}[1]{\textbf{\textcolor{ForestGreen}{[KLB: #1]}}\xspace}
\providecommand{\ys}[1]{\textbf{\textcolor{CadetBlue}{[YS: #1]}}\xspace}
\newcommand{\denselist}{\itemsep 0pt\topsep-6pt\partopsep-6pt}
\title{Understanding Understanding: \\[.4em] \Large A Pragmatic Framework Motivated by Large Language Models}
\author[1,3]{Kevin Leyton-Brown}
\author[1,2]{Yoav Shoham} 
\affil[1]{{AI}21 {L}abs}
\affil[2]{Stanford University}
\affil[3]{University of British Columbia}
\newcommand{\deltaPG}{\delta_{\text{s}}}
\newcommand{\deltaOR}{\delta_{\text{r}}}
\newcommand{\p}{x}
\newcommand{\q}{y}
\begin{document}

\maketitle

\begin{abstract}
Motivated by the rapid ascent of Large Language Models (LLMs) and debates about the extent to which they possess human-level qualities, we propose a framework for testing whether any agent (be it a machine or a human) understands a subject matter. 
In Turing-test fashion, the framework is based solely on the agent's performance, and specifically on how well it answers questions. Elements of the framework include circumscribing the set of questions (the ``scope of understanding''), requiring general competence (``passing grade''), avoiding ``ridiculous answers'', but still allowing wrong and ``I don't know'' answers to some questions. 
Reaching certainty about these conditions requires exhaustive testing of the questions which is impossible for nontrivial scopes, but we show how high confidence can be achieved via random sampling and the application of probabilistic confidence bounds.  
We also show that accompanying answers with 
explanations can improve the sample complexity required to achieve acceptable bounds, because an explanation of an answer implies the ability to answer many similar questions.
According to our framework, current LLMs cannot be said to understand nontrivial domains, but as the framework provides a practical recipe for testing understanding, it thus also constitutes a tool for building AI agents that do  understand.
\end{abstract}

\section{Introduction}

The question of whether machines can understand is as old as
\longver{%
computing. Although Lady Ada Lovelace didn't explicitly address this question, her remarks on what a century later came to be called computers \citep{Lovelace} suggest that she would have answered negatively. Much later, \citet{SearleChinese} certainly did answer negatively with his Chinese Room argument. 
}%
\shortver{computing \citep[see, e.g.,][]{Lovelace,SearleChinese}. }%
The issue has now returned with renewed force in the context of Large Language Models (LLMs), with people questioning whether LLMs
really understand their subject matter. To some, modern LLMs are so advanced so that they can be said to be approaching human-like intelligence, or at least show sparks thereof \citep{Sparks}. To others, LLMs may be impressive and useful artifacts, but inherently cannot understand, and only mimic it superficially \citep{BenderKoller,Chomsky}.
These latter sentiments tend to based on arguments such that the systems engage in mere statistical calculations rather than reasoning; they deal with syntax but lack semantics; they lack a ``world model''; they lack ``communicative intent''; they are  information-poor relative to the rich sensory information available to a person embedded in the real world; they lack emotions, ``qualia'' or consciousness; and so on. 
(We discuss these skeptical views, as well as ones more sympathetic to LLMs, in Section~\ref{RelatedWork}.)

It's not hard to sympathize with the intuitions underlying the skeptical arguments, and even agree with some of the conclusions (e.g., that there is much that current LLMs can't be said to understand), but we find the arguments themselves weak for two reasons. First, they are vague and appeal to ill-defined concepts. For example, what would constitute satisfactory semantics? Would a mapping from linguistic constructs to any knowledge graph \citep{KG} do the trick? And second, we find these claims to be assumptive assertions without explanatory power. You assert that a world model is needed; I assert not; can we go have lunch now?

The underlying problem is that the discussion has lacked a clear definition of understanding.
Providing a mathematical definition of understanding is this paper's first technical contribution (see Section~\ref{sec:definition}). 
We take an approach akin to the Turing Test \citep{Turing}, which identified intelligence with behavior indistinguishable from human in a question-answering task. Roughly, we define a domain by a set of questions, and say that understanding the domain consists of being able to answer these questions in a satisfactory way. ``Satisfactory'' has two aspects. The first is demonstrating \emph{general competence}: 
assigning a score to each answer, we require that the average score across all questions be above a given, relatively high (for example, .7) threshold (the ``passing grade'').
The second aspect is \emph{avoiding ridiculousness}, which is defined in two steps. First, we define a ridiculous answer as one receiving a particularly low score (here we fix it at 0). Then, we require that the probability of a ridiculous answer be negligible (for example, below .001).  
Depending on the domain, answering ``I don't know'' can be acceptable: often not as good as providing a substantive answer, indeed sometimes ridiculous, but sometimes (if infrequently) the perfect answer. 

\longver{%
It might seem that a question-answering framework can be used only to \emph{assess} understanding, but we argue that it is also a way to \emph{define} the concept. In doing so, we  appeal to the philosophical tradition of Pragmatism, which argues that any concept's meaning can derive solely from its observable consequences. 
For more discussion, see Section~\ref{RelatedWork}.
}%

Our paper's second key contribution (in Section~\ref{sec:assessment-samples}) is to 
\shortver{%
    see how these criteria can be applied in practice. Asking all questions within scope--which is the only way to verify our criteria with certainty--is infeasible, so instead we provide probabilistic guarantees as a function of the number of questions sampled, leveraging concentration inequalities.
}%
\longver{%
    answer the following question: How can we ever realistically conclude that any agent (be it another person or an LLM) understands anything? Although our definition of understanding implies an obvious way of assessing whether an agent understands---namely, asking every question in the scope---all scopes of interest are so large (indeed, often infinite) that it is not possible in practice to verify our criteria with certainty. The natural alternative is to forego certainty and settle for obtaining high confidence, by asking a reasonable number of questions taken at random from the scope. We make these ideas formal too, appealing to concentration inequalities to investigate the sample complexity of testing understanding.

}%
Unfortunately, this analysis shows that the number of samples required to reach acceptable confidence bounds can be high, especially due to the non-ridiculousness criterion. Our third key contribution is to show how \emph{explanations} can mitigate the problem (see Section~\ref{sec:assessment-explanations}). In a nutshell, when an answer is coupled with an explanation, it effectively provides answers to a large number of similar questions one could have asked but didn't. We define this formally and show that it can meaningfully reduce the number of required samples. (This also provides intuition for why explanations play a key role in educational settings.) 

\longver{%
    A final introductory remark on ``why care''. One sometimes hears that AI practitioners interested in building intelligent machines needn't waste time on idle philosophizing if the agent does a good enough job. A similar sentiment used sometimes to be expressed in AI about the precise definition of causality (see \citet{PearlCausality} for a comprehensive treatment).
    In the case of causality the issue is pretty much settled by now, and we believe here too practitioners should care. As will be seen, the definition of understanding here is directly related to shortcomings of current LLMs, including being ridiculous when they're not brilliant, not being able to say ``I don't know'', and not providing explanations of their output. The definition of understanding here not only sheds light on these shortcomings, but also points to ways in which to avoid them; as we discuss in the last section, our framework has practical implications for both evaluating and designing AI agents. (This is beside the fact that, in general, a little philosophizing won't hurt us engineers, particularly in this age in which AI is impacting society so profoundly.) 
}%

\section{Related work}
\label{RelatedWork}

The topic of understanding is broad, and writing on it spans areas as diverse as philosophy, education, religion, and more. It is impossible to discuss all this work, but we provide some highlights.

As was mentioned in the introduction, this paper aims not only to explain how to assess understanding, but to also define the concept. Our approach falls in the camp of \emph{pragmatism} within philosophy---a rich and complex tradition, whose relevant part is summarized in  the so-called Pragmatic Maxim of \citet{Peirce1878}: 
\longver{\begin{quote}}\shortver{``}Consider what effects, that might conceivably have practical bearings, we conceive the object of our conception to have. Then, our conception of these effects is the whole of our conception of the object.\shortver{''}
\longver{\end{quote}}
In other words, for a concept to have meaning, it must have practical consequences, and if two concepts have the same practical consequences, then they are effectively the same concept.
According this principle, seeking some deeper notion of what understanding ``actually is'', such as by probing the internal state of the agent or its other implementation details, provides no useful information beyond the observed agent's competence in answering questions.
\longver{%
    If all behavioral evidence aligns with the claim that an agent understands, we should accept the claim.
}%
Our approach is  clearly at odds with that of \citet{PottsUnderstanding}%
\shortver{%
, who prefer \emph{internalism} (inspecting the internals of the system against some notion of what it means to understand) and \emph{referentialism} (mapping objects in the system to other, externally-defined concepts) to pragmatism. 
}%
\longver{%
. They point to three metaphysical (or ontological) stances: \emph{internalism} (inspecting the internals of the system against some notion of what it means to understand), \emph{referentialism} (mapping objects in the system to other, externally-defined concepts), and pragmatism. 
}%
They critique the pragmatic approach, arguing that tests (be they the Turing Test or various benchmarks in the AI community) are fallible, susceptible to manipulation and over-fitting, and find the other two approaches to be more definitive. 
\longver{%
    We clearly take the opposite position. 
}%
We find that although  internalism and referentialism capture valid intuitions, they are too ill-defined to be applied rigorously in any setting.
We do agree that the empirical tests mentioned and critiqued by \citet{PottsUnderstanding} are not the right ones for assessing  understanding, and suggest criteria that are. For a broader discussion within philosophy of understanding, with an emphasis on its distinction from `knowing' and a strong epistemological bent, see \citet{sep-understanding}. 

\longver{%
    We're not aware of previous work that introduced a rigorous theory of understanding---pragmatic or otherwise---that could be applied to answer the question of whether LLMs understand. However, there is certainly prior relevant (if mostly informal) writing that deserves discussion. 
    This includes two pieces that share the primary title of this paper \citep{UUWiggins,UUWurman}. Drawing from education theory, \citet{UUWiggins} focus
}%
\shortver{%
    Within the education literature there is also much (and mostly informal) writing on the topic. Perhaps the best exemplar is a piece that shares the primary title of this paper \citep{UUWiggins}. It focuses 
}%
on the educator's goal of  achieving in the learner true understanding versus memorized, rote knowledge; or, in their words, ``…safeguard the distinction between `understand' and `know the right answer when prompted'{''} (p.11). To them, the distinction hinges on being able to relate different concepts, to apply the knowledge in a broad set of circumstances, and to supply explanations. Three of their quotes from the literature capture these intuitions well. 
\shortver{First, from \citet{Dewey1933}: ``}\longver{\begin{quote}}%
To grasp the meaning of a thing~[…] is to see it in its relations to other things:~[…] what consequences follow from it, what causes it, what uses it can be put to~[…] Meanings are general because [they are] applicable in a wide variety of different instances in spite of their difference.\shortver{''}\longver{\citep{Dewey1933}\vspace{.5em}

}\shortver{ Second, from \citet{Bloom1956}: ``}Students should not be able to solve the new problems merely by remembering the solution to or the precise method of solving a similar problem in class.~[…] Understanding is the ability to marshal skills wisely and appropriately~[…] Doing something correctly, therefore, is not, by itself, evidence of understanding.~[…] To understand is often being reflected in being able to explain \emph{why}…\shortver{''}\longver{ \citep{Bloom1956}\vspace{.5em}

}\shortver{ Finally, from \citet{Bransford2000}: ``}Teaching specific topics or skills without~[…] a broader fundamental structure within a field of knowledge is uneconomical.\shortver{'' }\longver{\citep{Bransford2000}
\end{quote}
}Our framework can be thought of as concretizing and formalizing some of these intuitions. 

\longver{%
    \citet{UUWurman} is different---an epic, associative collection of evocative short essays on the nature of knowledge, understanding, learning, and other aspects of the human experience. It's not directly relevant to the current discussion, and we reference it here primarily because of the common title. But it does contain some relevant intuitions, for example in this quote:
    \begin{quote}
    The most essential prerequisite to understanding is to be able to admit when you don't understand something. (p.15)
    \end{quote} 
    As we'll see, these thoughts are directly related to our insisting on defining the scope of understanding and the inclusion of the ``I don't know" answer in our framework.
}%

Within AI proper there has also been much writing relevant to the topic. Some researchers lean towards ascribing a measure of understanding even to current LLMs. This includes \citet{Sparks}, who argue on experimental grounds that GPT-4 shows ``sparks of artificial general intelligence (AGI)'' 
(ill-defined as the term is).
Similarly, in a dialogue between two prominent researchers \citep{HintonNg}, both seem to agree that modern LLMs exhibit a measure of understanding, encoded in neural networks' internal structures. \citet{ArcasDaedalus} explicitly argues that LLMs should cause us to reevaluate our notion of understanding, and that ``statistics do amount to understanding, in any falsifiable sense''. 
\longver{%

}%
Most researchers are more circumspect on the topic, offering varying degrees of skepticism. Some, such as \citet{BenderKoller}, are completely dismissive of the idea that LLMs can exhibit any understanding at all. Somewhat less opinionated but still quite skeptical, \citet{MarcusDavisMITTR20, MarcusDavisCACM23} meticulously chronicle telltale mistakes of LLMs that, to the authors, indicate lack of understanding. 
Some authors point to specific technical elements that in their view are missing from LLMs. To pick two examples, \citet{RussinPFC} point to the higher-level functionality of the prefrontal cortex  and suggests that AI systems should learn from its structure by way of achieving out-of-distribution robustness; and \citet{XiaCausal} argue that learning in neural networks cannot reconstruct causal structures of the kind defined by \citet{PearlCausality}. 

Some of the most interesting writing on the topic presents new questions or perspectives, rather than offering highly opinionated answers. For example, \citet{WestEtAl} distinguishes between the ability to produce expert output and the ability to understand the output, and creatively points to how LLMs and humans may diverge in this regard.
%
\longver{

}%
Of particular note is a comprehensive survey of the AI literature on understanding by \citet{MitchellKrakauer}, which includes some of the above references and many others. This survey features clearly written commentary on the literature and the diverging views within it. The survey does not offer its own  definition of understanding or how it is to be evaluated, but rather concludes that there is a need for defining ``different modes of understanding'', with the possibility that understanding by machines will be different from that of humans. In this paper we propose a uniform framework for understanding understanding, which can be instantiated in different ways by varying the questions, answers, and explanations.
As \citet{MitchellKrakauer} note, the topic of whether machines can understand is still being actively debated. Our paper will not settle the debate, but hopefully will help move the conversation forward and put it on more solid ground.

\section{Defining Understanding}
\label{sec:definition}

To define understanding, we begin by observing  that it is never said simply that someone understands; there is always an explicit or implied subject matter that they do or don't understand. You understand arithmetic, you understand magnetism, you understand Buddhism, you understand human emotions. Let's call that which you understand the \emph{scope of understanding}, or simply the scope.
We define a scope by a set of questions with a distribution over them, and understanding by how well one is able to answer those questions (``well'' being defined below).\footnote{To anticipate a possible objection: One could imagine an argument that a person might understand a question perfectly but still not be able to answer it. But this is being unclear about the scope of understanding; there is no understanding without knowledge. If you can't solve any addition problem then you don't understand arithmetic, and if you can't answer any question about WWII then you don't understand world history.} Of course, a scope cannot be inferred by a term such as ``arithmetic''; we would judge whether a 10th grader understands arithmetic differently than whether a Field Medal winner understands it. So the scope will be defined by a process that generates the questions that comprise the scope. This process could be simply to sample from a distribution over an explicit set of questions, or a more involved procedure (e.g., draw two real numbers from [0, 1,000,000], draw an arithmetic operation from \{+,\,-,\,*,\,/\}) defining a large or even infinite set of questions.
%
%
While we could define any set of questions as a scope, our framework is useful for sets that correspond to a conceptually natural domain, 
and natural domains are never small.
Another issue is robustness to surface perturbations. If a question is in the scope, so should be its natural rephrasings, so that answers aren't overfit to specific surface formulations. There is the practical question of how to specify such a large set in practice, especially a set whose boundaries may be fuzzy. But here we take the scope as given. 



\longver{%
    \begin{definition}
    A \emph{domain} is a tuple $(Q,\Delta_Q,A,S)$ where
    \begin{description}\denselist
    \item{--} $Q$ is a set of questions.
    \item{--} $\Delta_Q$ is a probability distribution over $Q$.
    \item{--} $A$ is a set of answers with $idk\in A$ ($idk$ standing for ``I don't know''). 
    \item{--} $S: Q\times A \rightarrow [0,1]$ is a scoring function for answers to given questions.
    \end{description}
    \end{definition}
}%
\shortver{%
    \begin{definition}
    A \emph{domain} is a tuple $(Q,\Delta_Q,A,S)$ where $Q$ is a set of questions; $\Delta_Q$ is a probability distribution over $Q$; $A$ is a set of answers with $idk\in A$ ($idk$ standing for ``I don't know''); and $S: Q\times A \rightarrow [0,1]$ is a scoring function for answers to given questions.
    \end{definition}
}%

When $S(q,a)=0$ we say that $a$ is a ridiculous answer to $q$.\footnote{
   	Observe that assigning the same score to all ridiculous answers prevents us from assessing degrees of ridiculousness. We made this modeling choice because our framework treats all ridiculous answers in the same way, and so we have no need for this additional flexibility. It would be straightforward to extend the framework to allow a range of ridiculous scores (e.g., all scores below 0.01) at the expense of additional complexity.} 
We do not constrain the values of $S(q,idk)$. Often such scores will be low; if you profess ignorance about the value of $1+1$ in the scope of arithmetic you get a very low score, indeed perhaps even a score of 0. But answering $idk$ to the question of whether $P=NP$ in the scope of computational complexity gives you a high score. That said, domains in which $idk$ gets high scores for all or most questions will rarely be interesting.\footnote{\shortver{However, one could}\longver{Rarely, but not never. For example, it could be interesting to} test an agent's understanding of the limits of its own knowledge via a scope consisting \longver{entirely }of facts that the agent has no way of knowing, such as \longver{current }events that occurred after the  training data cutoff date.}

\begin{definition}
\longver{%
    An \emph{understanding criterion} for a domain is a pair $(PG,RID)$ where
    \begin{description}\denselist
    \item{--} $PG\in (0,1)$ is the required level of overall competence, or the ``passing grade''.
    \item{--} $RID\in (0,1)$ where $RID << PG$ is a global ridiculousness threshold, the maximum allowable probability of obtaining a ridiculous answer when sampling a question from $\Delta_Q$.
    \end{description}
    }%
\shortver{%
    An \emph{understanding criterion} for a domain is a pair $(PG,RID)$ where $PG\in (0,1)$ is the required level of overall competence, or the ``passing grade''; and $RID\in (0,1)$ where $RID << PG$ is a global ridiculousness threshold, a maximum allowable probability of a ridiculous answer across all questions.
    }%
\end{definition}

The intuition should be that $PG$ is relatively high (for example, .7) and $RID$ is so low as to be negligible. 
One way of choosing a concrete value for $RID$ is to specify a test length $n_t$ on which the agent should be guaranteed to avoid making a ridiculous answer with high probability $1-\delta$. Then we can write
$ (1-RID)^{n_t} \geq 1-\delta.$
Solving for $RID$, we obtain 
$RID = 1-\exp\left(\frac{\ln(1-\delta)}{n_t}\right).$
For example, if $\delta=0.05$ and $n_t=100$, $RID = 0.00052$; if $n_t = 1000$, $RID = 0.000052$. 

\begin{definition}
Given a domain and an understanding criterion as above, a question-answering (QA) system $ans: Q\rightarrow A$ is said to exhibit understanding within scope $Q$ if and only if:  
\begin{description}\denselist
\shortver{\vspace{-.2em}}%
\item{(Overall passing grade)} $\mathbb{E}_{q\sim \Delta_Q}\;S(q,ans(q)) \geq PG.$ 
\item{(Ridiculous answers are rare)} $P_{q \sim \Delta_Q}(S(q,ans(q))=0) \leq RID$.
\end{description}
\label{def:understanding}
\end{definition}

That is, the mean score according to $\Delta_Q$ must clear the threshold $PG$, and the probability with which the agent generates ridiculous answers must not exceed the (very low) limit $RID$.
Relative to this definition, and given reasonable values of $PG$ and $RID$, modern LLMs fall short. In very circumscribed scopes they do pass the criteria; for example, most do well when asked to identify the capital cities of countries. But they invariably fail in larger or more demanding scopes, witness the example in Appendix~\ref{TemporalExample}, which, empirically speaking, is not a rare isolated case (the example will also prove useful when we discuss explanations later in the paper). 

We make a few remarks on this last definition. First, observe that $\Delta_Q$ need not weight every question equally. This non-uniform distribution can reflect e.g., relative prevalence, importance, or difficulty of questions. Second, by their probabilistic nature,
LLMs can provide a probability distribution over answers (i.e., when we do not sample with temperature 0); one might therefore want to average over system executions, over  top-$k$ answers, or something else. The function $S$ can easily be modified to return an expected score and all of our results that follow carry through directly. 
Finally, it is possible to imagine a more draconian version of our ridiculousness condition which would set $RID =0$, that is, disallow any ridiculous answer, but there are several problems with this. First, no finite number of samples would suffice to distinguish between the cases where an agent never gives ridiculous answers and where it gives them with some infinitesimally small probability; thus, the draconian definition would preclude high-probability confidence bounds. Second, that definition is counterintuitive: we do conclude that other people understand when they perform well but very occasionally have idiosyncratic lapses. Finally, existing LLMs that exhibit ridiculous behavior do so more than a negligible fraction of the time, making the less stringent condition sufficient.

\section{Assessing Understanding via Independently Sampled Questions}
\label{sec:assessment-samples}

Given the framework just laid out, the only way to verify with certainty that an agent understands within a scope is to test it on all questions and see if the answers satisfy the conditions in Definition~\ref{def:understanding}. Even if the agent answers a large number of questions well, any questions we omit may reveal a lack of understanding; e.g., they might all yield ridiculous answers. 
Testing all questions is impractical for any scope of interesting size and  impossible for infinite-sized scopes. 
In such cases, however, it is possible to gain high \emph{confidence} that an agent understands by assessing its performance on many independent samples taken randomly from  the distribution of questions.  Intuitively, when we choose questions at random, the likelihood that the agent's performance is qualitatively different on the unasked questions as on the asked questions falls with the number of asked questions.

Our goal is to identify a test for understanding based on the agent's answers to independently sampled questions, and to prove that the probability that this test gives incorrect answers falls (ideally, quickly) with the number of questions asked. We can then investigate how many randomly chosen questions we need in order to obtain high confidence in the test's results. As we shall see, the number of questions required is large but not completely implausible. 
In the next section we will see how the notion of explanations can help to further reduce the required number of questions.

Formally, let $n$ be the number of questions we have asked, each sampled independently from $\Delta_Q$. Each question $q_i$ receives score $s_i =S(q_i,ans(q_i)) \in [0,1]$. 
Let $\hat{s}$ denote our sample estimate of the score $\frac{1}{n}\sum_{i = 1}^n s_i$ and let $s$ denote the true score $\mathbb{E}_{q\sim \Delta_Q}\;S(q,ans(q))$. 
Let $r_i = 1$ if $s_i = 0$ and $0$ otherwise.
Let $\hat{r}$ denote the fraction of the agent's answers that were ridiculous: $\frac{1}{n}\sum_i r_i$, and let $r$ denote the agent's true probability of giving a ridiculous answer, $P_{q \sim \Delta_q}(S(q,ans(q))=0)$. 

We need a few mathematical preliminaries. First, define $d(\p,\q) = \p \ln\left(\frac{\p}{\q}\right) + (1 - \p) \ln\left(\frac{1 - \p}{1 - \q}\right)$, replacing singularities ($\ln(0)$; division by zero) with limit values; see Definition~\ref{def:KL} in Appendix \ref{app:proofs}. This function can be interpreted as the relative entropy (or KL divergence) between two Bernoulli distributions\footnote{While $d(\p,\q)$ has an interpretation as the relative entropy between Bernoulli distributions, we can also simply consider it as an arbitrary function from $[0,1] \times [0,1] \rightarrow \mathds{R}^+$. We will use the functions $U$ and $L$, which depend on $d$, to upper bound the means of arbitrary (non-Bernoulli) distributions restricted to $[0,1]$.} having parameters $\p \in [0,1]$ and $\q \in [0,1]$.

Our testing procedure will appeal to two different functions. These do not have closed-form expressions, because $d$ does not have a closed-form inverse. Nevertheless, they are easy to approximate numerically because $d(\p, \cdot)$ is convex.
\begin{equation}
	U(\p, n, \delta) = \max\{\q \in [0, 1] : d(\p, \q) \leq \ln(1/\delta)/n\} \label{eq:U}
\end{equation}
\begin{equation}
	L(\p, n, \delta) = \min\{\q \in [0, 1] : d(\p, \q) \leq \ln(1/\delta)/n\} \label{eq:L}
\end{equation}

\newcommand{\itm}[1]{\item[\parbox{6em}{\textnormal{\emph{(#1)\hfill}}}]}
\newcommand{\GG}{\textit{Good-Grade}\xspace}
\newcommand{\BG}{\textit{Bad-Grade}\xspace}
\newcommand{\GR}{\textit{Good-Rid}\xspace}
\newcommand{\BR}{\textit{Bad-Rid}\xspace}
\newcommand{\GRp}{\textit{Good-Rid}$^\prime$\xspace}

We can now state our procedure for testing whether an agent understands a given scope.

\begin{procedure}[Testing Procedure Based on Independent Samples]
This procedure takes two arguments: a number of samples $n$ and a desired maximum failure probability $\delta$. 
\begin{enumerate}
	\item Determine that the agent understands the scope of $Q$ if:
	\begin{description}
		\itm{\GG} $L(\hat{s},n,\delta) \geq PG$; and
		\itm{\GR} $U(\hat{r},n,\delta) \leq RID.$ 
	\end{description}
	\item Determine that the agent does not understand the scope of $Q$ if:
	\begin{description}
		\itm{\BG} $U(\hat{s},n,\delta/2) < PG$; or
		\itm{\BR} $L(\hat{r},n,\delta/2) > RID$.
	\end{description}
\label{fail-PG}
	\item Otherwise, draw no conclusion; a larger $n$ (or larger $\delta$) is required.
\end{enumerate}
\label{def:testing-samples}
\end{procedure}
\vspace{1em}

\begin{theorem}
If Procedure~\ref{def:testing-samples} determines that an agent does or does not understand a given scope, this conclusion is correct with probability at least $1-\delta$.
\label{thm:samples}
\end{theorem}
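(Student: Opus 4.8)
The plan is to reduce the theorem to two one-sided confidence guarantees for the functions $U$ and $L$, and then to finish with a short case analysis on the true (deterministic) values $s$ and $r$ of the agent. First I would invoke, from the appendix, the Chernoff--Hoeffding bound in its relative-entropy form: since the per-question scores $s_i$ and the ridiculousness indicators $r_i$ are i.i.d.\ and take values in $[0,1]$, each has a moment generating function dominated by that of a Bernoulli variable with the same mean, so for any fixed true mean the tail bounds $P(\hat{s}\le a)\le e^{-n\,d(a,s)}$ (for $a\le s$) and $P(\hat{s}\ge a)\le e^{-n\,d(a,s)}$ (for $a\ge s$) hold. Inverting these through the definitions of $U$ and $L$---using that $d(\hat{s},\cdot)$ is convex, vanishes at $\hat{s}$, and increases away from it, so the defining sets are closed intervals whose extrema are attained---yields the two facts I actually need: for any level $\delta'$, $P\bigl(s>U(\hat{s},n,\delta')\bigr)\le\delta'$ and $P\bigl(s<L(\hat{s},n,\delta')\bigr)\le\delta'$, and likewise with $r$ in place of $s$.

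Next I would treat the ``understands'' verdict. Fixing the agent makes $s$ and $r$ deterministic. Suppose the procedure outputs ``understands'' but this is wrong, i.e.\ the agent does not understand, so $s<PG$ or $r>RID$. If $s<PG$, then the \GG test having passed forces $L(\hat{s},n,\delta)\ge PG>s$, which is an instance of the event $\{s<L(\hat{s},n,\delta)\}$ of probability at most $\delta$. If instead $r>RID$, the \GR test forces $U(\hat{r},n,\delta)\le RID<r$, an instance of $\{r>U(\hat{r},n,\delta)\}$, again of probability at most $\delta$. The key observation is that \emph{one} of these always suffices: ``does not understand'' means at least one criterion is genuinely violated, so the erroneous verdict is contained in the single confidence-failure event attached to a violated criterion (and, if both are violated, in either one), hence the total error is at most $\delta$ with no union bound.

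Then I would treat the ``does not understand'' verdict, where the asymmetry with the budget $\delta/2$ appears. Suppose the procedure outputs ``does not understand'' but the agent in fact understands, so $s\ge PG$ and $r\le RID$. Now the verdict fires whenever \emph{either} \BG or \BR triggers, and either can trigger erroneously, so both must be controlled at once. An erroneous \BG gives $U(\hat{s},n,\delta/2)<PG\le s$, i.e.\ $s>U(\hat{s},n,\delta/2)$, of probability at most $\delta/2$; an erroneous \BR gives $L(\hat{r},n,\delta/2)>RID\ge r$, i.e.\ $r<L(\hat{r},n,\delta/2)$, also of probability at most $\delta/2$. A union bound over these two events bounds the error by $\delta$. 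This is exactly why the two verdicts use different levels: the ``understands'' verdict is a \emph{conjunction} of good tests, so a single violated criterion already pins the error at $\delta$, whereas the ``does not understand'' verdict is a \emph{disjunction} of bad tests, whose budget must be split as $\delta/2+\delta/2$.

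The step I expect to be the main obstacle is not the case analysis---which is essentially the two short arguments above---but the justification of the one-sided confidence guarantees for $U$ and $L$ at the stated level: in particular, showing that the Bernoulli KL bound governs general $[0,1]$-valued scores (the domination argument flagged in the paper's footnote) and that the inversion defining $U$ and $L$ is valid (attainment of the $\max/\min$, and the correct tail direction matching $s>U$ to a lower-tail event and $s<L$ to an upper-tail event). Once those lemmas are available, the remaining work is purely combinatorial, and the only genuinely substantive design point to get right is the conjunction-versus-disjunction asymmetry that licenses the $\delta$ versus $\delta/2$ allocation.
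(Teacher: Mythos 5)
Your proposal is correct and takes essentially the same route as the paper's proof: both invert the KL-form Chernoff bound (which the paper imports from \citet{lattimore2020bandit}, with the extension to general $[0,1]$-valued variables credited to \citet{garivier2011kl}, rather than reproving it as you sketch) to obtain the four one-sided guarantees behind \emph{Good-Grade}, \emph{Bad-Grade}, \emph{Good-Rid}, and \emph{Bad-Rid}, then certify the conjunctive ``understands'' verdict without splitting $\delta$ and the disjunctive ``does not understand'' verdict via a $\delta/2+\delta/2$ union bound. Your case analysis on the deterministic true values $s$ and $r$---locating the erroneous ``understands'' verdict inside the single confidence-failure event of a genuinely violated criterion---is a slightly crisper articulation of the paper's observation that the error event lies in the intersection of the individual failure events, but it is the same bound obtained by the same mechanism.
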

\longver{%
    \vspace{-.5em}
    \begin{proof}
    See Appendix~\ref{app:proofs}.
    \end{proof}
}%

\noindent\emph{Remarks.} \shortver{All proofs appear in Appendix~\ref{app:proofs}.} Observe that our result requires no assumptions about the number of questions in the scope or about the probability distribution $\Delta_Q$; all that is needed is that questions are drawn independently. Note that \GG and \GR depend on $\delta$ but that \BG and \BR depend on $\delta/2$. This asymmetry arises because Step 2 asks whether either of two tests is true and so has two ways of failing, thus requiring multiple hypothesis correction. Step 1 asks whether \emph{both} of two tests are true, and hence does not require correction. Finally, in Appendix~\ref{app:Hoeffding-Chernoff} we explain why we leveraged the Chernoff bound, despite its lack of a closed-form expression, rather than the Hoeffding bound. In brief, while both bounds apply to our setting, Chernoff is much tighter when the true mean is close to 0 or 1.

Does this result enable the high-probability assessment of understanding within a reasonable number of questions? Let us consider some concrete numbers. Suppose that we are willing to tolerate an overall failure probability of $0.05$. First, let us consider average scores that would be required for \GG and \BG to hold as we vary the number of questions asked, summarized in Table~\ref{table:bounding-s-Chernoff}. 
Observe that our bounds depend on the observed empirical distribution, with confidence intervals narrowing as $|0.5 - \hat{s}|$ grows.
If the agent answers 100 questions and achieves a score of 0.9, it will pass if $PG < .811$ or fail if $PG > .963$; if it only achieves a score of 0.5, these thresholds change to passing if $PG < 0.379$ and failing if $PG > 0.634$, a gap of $0.25$ vs $0.15$. If it answers 10,000 questions and achieves a score of 0.9, it passes if $PG < 0.892$ and fails if $PG > 0.908$. 

Now let us consider the tests \GR and \BR and the value $RID = 0.00052$ derived in Section~\ref{sec:definition}; see Table~\ref{table:bounding-r-Chernoff}. 
Giving no 
ridiculous answers after answering 1,000 questions is insufficient to satisfy \GR with high probability ($U(0,n,0.05) \approx 0.0029913 > 0.00052$) but after it answers 10,000 questions without giving any ridiculous answers, it does satisfy \GR. If the agent first gives 3 ridiculous answers and thereafter always gives non-ridiculous answers, our upper bound starts out much higher but converges (more slowly) towards $0$ as $n \rightarrow \infty$. 
In this case \GR no longer holds after 10,000 questions but does after 100,000. If the agent consistently produces a 1\% empirical fraction of ridiculous answers, \GR will never hold ($U(\p,n,\delta) \geq \p$ for all $(n, \delta)$); \BR will not hold after 100 questions but will after 1,000 questions. 

Overall, a good rule of thumb seems to be that 1,000 questions are often enough to assess average score, and that 10,000 questions are often appropriate for assessing the probability of giving ridiculous answers. If we are interested in more tightly bounding the likelihood of observing ridiculous answers, every order-of-magnitude increase in the number of questions for which we would like the agent to be guaranteed with high probability to give no ridiculous answer roughly corresponds to an order-of-magnitude decrease in $RID$, and hence (referring to the table) roughly to an order-of-magnitude increase in the required $n$. Are these reasonable numbers in practice? Certainly for human agents they are not; no school test can have this many questions. One could argue that they are more plausible for AI systems, but this too depends on the setting. If one assumes that  the tests can be done in a completely self-supervised manner then perhaps they are, but this is a tenuous assumption. When a human assessment (of answer quality in general, and ridiculousness in particular) is needed, which arguably is the common case, then while the number of questions can be greater than when a person is the one answering them, it is still limited. The current cost of human annotation ranges between \$5--50 per item; at \$10 per assessment this brings the cost to six figures. Given that the process must be repeated as the set of questions evolves and as the model is retrained, this becomes prohibitive.

\begin{table}[t] 
		\centering
		\begin{tabular}{rcccccc}
			\toprule
$n$ & $L(0.9,n,0.05)$ & $U(0.9,n,0.025)$ & $L(0.5,n,0.05)$ & $U(0.5,n,0.025)$ \\
\midrule
10 & 0.545253 & 0.999023 & 0.164322 & 0.861187 \\
100 & 0.811171 & 0.962052 & 0.379423 & 0.633343 \\
1,000 & 0.875192 & 0.923796 & 0.461356 & 0.542868 \\
10,000 & 0.892497 & 0.907952 & 0.487763 & 0.513579 \\
100,000 & 0.897662 & 0.902557 & 0.496130 & 0.504295 \\
1,000,000 & 0.899264 & 0.900813 & 0.498776 & 0.501358 \\
			\bottomrule
		\end{tabular}
	\vspace{.6em}	
	\caption{Example values for bounding ${s}$ in Procedure~\ref{def:testing-samples}, $\delta = 0.05$. \GG requires that the empirical average score exceed $PG$ by at least $L(\hat{s},n,\delta)$, and conversely \BG requires that $PG$ exceed the empirical average score by at least $U(\hat{s},n,\delta/2)$. The two pairs of columns thus represent the confidence intervals around empirical averages of $\hat{s} = 0.9$ and $\hat{s} = 0.5$ respectively. 
	}
	\label{table:bounding-s-Chernoff}
\end{table}

\begin{table}[t] 
		\centering
		\begin{tabular}{rcccccc}
			\toprule
$n$ & $U(0,n,0.05)$ & $U(3/n,n,0.05)$ & $L(0.01,n,0.025)$ \\
\midrule
10 & 0.2588656 & 0.6783535 & 0.0000000 \\
100 & 0.0295130 & 0.0913315 & 0.0000933 \\
1,000 & 0.0029912 & 0.0094020 & 0.0036846 \\
10,000 & 0.0002995 & 0.0009429 & 0.0075333 \\
100,000 & 0.0000300 & 0.0000943 & 0.0091693 \\
1,000,000 & 0.0000030 & 0.0000094 & 0.0097321 \\
			\bottomrule
		\end{tabular}
	\vspace{.6em}	
	\caption{Example values for bounding $r$ in Procedure~\ref{def:testing-samples} with $\delta = 0.05$. If the empirical fraction of ridiculous answers is zero, \GR holds if $U(0,n,0.05) \leq RID$. In the case that exactly three ridiculous answers are observed independently of $n$, \GR holds if $U(3/n, n, 0.05) \leq RID$. In the case that $\hat{r} = 0.01$ for every $n$, \BR holds if $L(0.001,n,0.025) > RID$.
	}
	\label{table:bounding-r-Chernoff}
\end{table}

\newcommand{\proc}{\ensuremath{}\alpha}
\newcommand{\Proc}{\mathcal{A}}
\newcommand{\Qp}{\ensuremath{}Q_{\proc}}
\newcommand{\Qpi}{\ensuremath{}Q_{\proc_i}}
\newcommand{\DQpi}{\Delta_{\Qpi}}
\newcommand{\bA}{\ensuremath\bar{Q}}
\newcommand{\bAi}{\ensuremath\bar{Q_{\proc_i}}}
\newcommand{\pai}{p_{\proc_i}}
\newcommand{\sai}{s_{\proc_i}}
\newcommand{\rai}{r_{\proc_i}}
\newcommand{\pa}{p_{\proc}}
\newcommand{\sa}{s_{\proc}}
\renewcommand{\ra}{r_{\proc}}
\newcommand{\bS}{\bar{X}}
\newcommand{\bn}{\bar{n}}
\newcommand{\br}{\bar{r}}
\newcommand{\hbs}{\hat{\bar{s}}}
\newcommand{\hbr}{\hat{\bar{r}}}
\newcommand{\bp}{\bar{p}}
\newcommand{\deltaPGS}{\delta_{\text{PG-}\bS}}
\newcommand{\bs}{\bar{s}}
\newcommand{\piai}{{\pi}_{\proc_i}}
\newcommand{\hpiai}{\hat{\pi}_{\proc_i}}
\newcommand{\hpia}{\hat{\pi}_{\proc}}
\newcommand{\dai}{\frac{\delta}{|\Proc|+1}}
\newcommand{\daii}{\frac{\delta}{2(|\Proc|+1)}}
\newcommand{\nai}{n_{\proc_i}}
\newcommand{\na}{n_{\proc}}

\section{Explanations: Beyond Independent Sampling}
\label{sec:assessment-explanations}

We have seen that assessing understanding with acceptable confidence based only on answers to a sample of questions can require very large sample sizes, especially as we push for a small $RID$. We now show that when explanations\longver{\footnote{One could replace the term `explanation' with `justification', but we stick to the former since it's the term most commonly used in connection with LLMs.}} accompany answers, we can achieve better sample efficiency. 
The idea that assessing understanding is aided by eliciting explanations is quite familiar from educational settings, where
homework and exam questions often end with “explain your answer”. As noted in the (mostly informal)
education literature we surveyed in Section \ref{RelatedWork}, good explanations
demonstrate knowledge of general principles and the ability to apply them broadly. Explanations have also received attention in the NLP community; e.g., one line of work quantifies the value of explanations as the performance gain they offer a student model seeking to simulate a teacher \citep{pruthi2022evaluating}.

Explanations have many facets and come in different forms. A common form is the causal explanation, for example as defined by \citet{PearlWhy}, who define  three increasingly sophisticated explanatory levels (the ``ladder of causation”). Another facet of causal explanations is that they can come at multiple granularities, as demonstrated in Feynman's \citeyearp{FeynmanWhy} beautiful response to the lay question of ``why magnets attract”. But causal explanations, while common and important, are a special case. For example, when we ask a student to explain their answer to a mathematical problem, we don't expect a causal explanation, but rather the steps in the derivation, or at least mention of the mathematical principles applied. In other cases, such as in response to factual questions (for example, ``Is John an employee of Acme Corp?"), a good explanation is a reference to a definitive source of relevant information (for example, Acme's database of employees). \longver{Another source of authoritative information can be a a trusted person.\footnote{One is reminded of Angluin’s \citeyearp{AngluinProofs} tongue-in-cheek list of bogus proofs techniques, which
includes the category of ‘proof by eminent authority’ (``I saw Karp in the elevator and he said it was
probably NP-complete").}}
What is common to all of types of explanation is that they involve a \emph{procedure} that is deemed authoritative and that could similarly be applied to many other questions. As a result, the agent can be given some amount of credit for those questions without these being asked explicitly; ``an explanation is worth a thousand questions". 


Formally, we augment our existing model so that, when we ask the agent a question, the agent sometimes couples its answer with a particular procedure that it used to derive the answer. When it does, we get evidence about how the agent would have answered similar questions.  
\longver{It's important that this be an unambiguous procedure.}%
For example, we might prompt the LLM to use \emph{chain-of-code} reasoning \citep{li2023chain}, explicitly asking the LLM to produce code to answer the question. The explanation could also be a mathematical equation, a precise table being looked up, and so on. But it cannot be any string of words that purport to constitute an explanation; Appendix~\ref{TemporalExample} provides an example of an LLM offering convincing-sounding explanations that aren't worth the metaphorical paper they're written on.  

Formally, let $\Proc = \{\proc_1, \ldots, \proc_k\}$ be a set of procedures offered by the agent to explain how it derived answers to a corresponding set of questions. Each procedure $\proc$ is potentially applicable to more than just the single question that prompted it. We assume that set of questions to which each $\proc_i$ is applicable is known, denoting this set $\Qpi \subseteq Q$, and that $\forall i,j\; \Qpi \cap\, Q_{\proc_j} = \emptyset$.
We further assume that the answer generated by each explanation $\proc_i$ achieves the same score on all questions to which it applies;\footnote{Our results can be generalized beyond this case at the cost of notational complexity.} denote this score $\sai$. If $\sai=0$, the answers generated by the explanation are ridiculous for every $q_i \in \Qpi$, which we denote $\rai = 1$; otherwise $\rai = 0$. We define the set of ``remaining'' questions to which no procedure applies as $\bA$ 
and thus write $Q = \left(\bigcup_{\proc_i \in \Proc} \Qpi\right) \,\cup\, \bA$. Let $\pai = P_{q \sim \Delta_Q}(q \in \Qpi)$, the probability that a random question from the original distribution $\Delta_Q$ will belong the set of questions covered by explanation $\proc_i$. 
Let $\DQpi$ denote the renormalized restriction of probability distribution $\Delta_Q$ to domain $\Qpi$: 
for all $q_i \in \Qpi$, $\DQpi(q_i) = \frac{1}{\sum_{q_j \in \Qpi} \Delta_Q(q_j)} \cdot \Delta_Q(q_i).$ 

Given $\Proc$, what should we believe about the agent's performance on our original test $Q$? 
%
Intuitively, while a procedure $\proc$ can tell us more than an answer to a single question, we do not obtain the set $\Qp$ via independent sampling, so we cannot directly incorporate it into our previous analysis.
Imagine that in addition to $\Proc$ we have a set $\bS$ of additional answers to questions sampled independently from $\bA$, denoting $|\bS|$ as $\bn$, and that the agent does not give an explanation for any question in $\bS$. In this section our key question will be: How much better off are we having received explanations $\Proc$ than in the situation where we did not receive these explanations and took $\bn$ samples from $Q$?

\includeuncertainty{\subsection{When Application of Trusted Procedures is Certain}}

In our analysis below we make two substantive assumptions: that we believe that the agent claiming to have used explanation $\proc$ in fact did, and (even more optimistically) that the agent would apply it reliably to every question in $\Qp$. Procedure~\ref{def:testing-explanations} tests an agent's understanding in this setting.

\newcommand{\GGE}{\mbox{\textit{Good-Grade-Exp}}\xspace}
\newcommand{\BGE}{\mbox{\textit{Bad-Grade-Exp}}\xspace}
\newcommand{\GRE}{\mbox{\textit{Good-Rid-Exp}}\xspace}
\newcommand{\BRE}{\mbox{\textit{Bad-Rid-Exp}}\xspace}

\begin{procedure}[Testing Procedure Based on Samples and Explanations]
This procedure takes two arguments: a number of samples $n$ and a desired maximum failure probability $\delta$.
\begin{enumerate}
	\item Determine that the agent understands the scope of $Q$ if:
	\begin{description}[labelwidth=2.8cm]
		\itm{\GGE} $\left(\sum_{\proc_i \in \Proc} \pai \cdot \sai \right) + \bp \cdot L(\hbs,\bn,\delta) \geq PG$; and \label{pass-PG2}
		\itm{\GRE} $\left(\sum_{\proc_i \in \Proc} \pai \cdot \rai \right) + \bp \cdot U(\hbr,\bn,\delta) \leq RID.$ \label{pass-rid2}
	\end{description}
	\item Determine that the agent does not understand the scope of $Q$ if:
	\begin{description}[labelwidth=2.8cm]
		\itm{\BGE} $\left(\sum_{\proc_i \in \Proc} \pai \cdot \sai \right) + \bp \cdot U(\hbs,\bn,\delta/2) < PG$; or
		\itm{\BRE} $\left(\sum_{\proc_i \in \Proc} \pai \cdot \sai \right) + \bp \cdot L(\hbr,\bn,\delta/2) > RID$.
	\end{description}
	\label{fail-PG2}
	\item Otherwise, draw no conclusion; a larger $n$ (or larger $\delta$) is required.
\end{enumerate}
\label{def:testing-explanations}
\end{procedure}

\begin{theorem}
	If Procedure~\ref{def:testing-explanations} determines that an agent does or does not understand a given scope, this conclusion is correct with probability at least $1-\delta$.
	\label{thm:explanations}
\end{theorem}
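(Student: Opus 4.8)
The plan is to reduce the statement to the independently-sampled case already handled by Theorem~\ref{thm:samples}, by isolating all of the randomness in the subsample $\bS$ and treating the explanation contribution as a known additive constant. First I would record the two exact decompositions that follow from the law of total expectation together with the standing assumptions (the $\Qpi$ are disjoint, the claimed explanations are trusted and applied reliably to all of $\Qpi$, and each $\proc_i$ attains the constant score $\sai$ on $\Qpi$). Writing $\bs$ and $\br$ for the true mean score and true ridiculous-answer probability under the renormalized restriction of $\Delta_Q$ to $\bA$, these are
\[ s := \mathbb{E}_{q\sim\Delta_Q}\, S(q,ans(q)) = \Big(\sum_{\proc_i\in\Proc}\pai\,\sai\Big) + \bp\,\bs, \qquad r := P_{q\sim\Delta_Q}\big(S(q,ans(q))=0\big) = \Big(\sum_{\proc_i\in\Proc}\pai\,\rai\Big) + \bp\,\br. \]
The crucial structural point is that the parenthesized sums and $\bp = 1-\sum_i\pai$ are fixed, known quantities, so the only randomness entering the four tests of Procedure~\ref{def:testing-explanations} is through $\hbs$ and $\hbr$, which are empirical means of $\bn$ independent draws from $\bA$.

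Second, I would invoke verbatim the four one-sided Chernoff coverage guarantees established in the proof of Theorem~\ref{thm:samples}, now applied to these $\bn$ independent samples from $\bA$ (valid because scores lie in $[0,1]$ and ridiculousness indicators in $\{0,1\}\subseteq[0,1]$): namely $P(\bs < L(\hbs,\bn,\delta))\le\delta$ and $P(\bs > U(\hbs,\bn,\delta/2))\le\delta/2$, and likewise $P(\br > U(\hbr,\bn,\delta))\le\delta$ and $P(\br < L(\hbr,\bn,\delta/2))\le\delta/2$.

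Third, for the correctness of Step 1 I would exploit that $s$ and $r$ are deterministic properties of the agent and $\Delta_Q$, so the true world occupies exactly one state, and case-split on it. If the agent understands, the verdict is correct. If instead $s<PG$, then \GGE forces $\big(\sum_i\pai\sai\big)+\bp\,L(\hbs,\bn,\delta)\ge PG > s = \big(\sum_i\pai\sai\big)+\bp\,\bs$; cancelling the common known term and dividing by $\bp>0$ gives $L(\hbs,\bn,\delta)>\bs$, an event of probability at most $\delta$ (and when $\bp=0$ the quantity $s$ is known exactly, so \GGE directly contradicts $s<PG$ and the error probability is $0$). Symmetrically, if $s\ge PG$ but $r>RID$, then \GRE forces $U(\hbr,\bn,\delta)<\br$, again of probability at most $\delta$. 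The key observation — mirroring the remark following Theorem~\ref{thm:samples} — is that no union correction is needed: because the true state is fixed, only one of these two cases can apply, so a single $\delta$-coverage bound controls the entire Step-1 error.

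Finally, for Step 2 the true state is ``understands'', and a wrong verdict means \BGE or \BRE fired anyway. \BGE together with $s\ge PG$ yields $U(\hbs,\bn,\delta/2)<\bs$ (probability $\le\delta/2$), and \BRE together with $r\le RID$ yields $L(\hbr,\bn,\delta/2)>\br$ (probability $\le\delta/2$) after cancelling the same known explanation term. Here both tests can be triggered by a single understanding agent, so a union bound is genuinely required and gives total error $\le\delta$ — which is precisely why Step 2 runs at confidence $\delta/2$ while Step 1 runs at $\delta$. Since \GGE and \BGE (and likewise \GRE and \BRE) are mutually exclusive by monotonicity of $U,L$ in $\delta$ together with $L\le\hat{\cdot}\le U$ and $\bp\ge0$, the two steps never conflict, so whichever verdict is returned is correct with probability at least $1-\delta$. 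The hard part is not any individual inequality but getting this coverage bookkeeping exactly right: justifying the asymmetric use of $\delta$ versus $\delta/2$ through the fixed-true-state argument, and handling the boundary cases $\bp=0$ and confidence bounds pinned at $0$ or $1$.
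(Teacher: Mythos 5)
Your proposal is correct and follows essentially the same route as the paper's proof: the same law-of-total-expectation/total-probability decompositions with the explanation terms treated as known additive constants, the same Chernoff-based lemmas from Theorem~1 reapplied to the $\bar{n}$ independent samples from $\bar{Q}$, and the same $\delta$ versus $\delta/2$ bookkeeping carried over from the proof of Theorem~1. Your fixed-true-state case split for Step~1 is just a more explicit (and arguably cleaner) rendering of the paper's observation that the conjunctive test requires no multiple-hypothesis correction, and your handling of the boundary case $\bar{p}=0$ covers a strict-inequality edge case that the paper's intersection argument glosses over.
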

\longver{%
    \vspace{-1.5em}
    \begin{proof}
    	See Appendix~\ref{app:proofs}.
    \end{proof}
}%

Respectively comparing each of our conditions (\GGE vs \GG, etc) answers the question we asked earlier about the quantitative power of explanations: our confidence bounds shrink in proportion to the fraction of $Q$ covered by procedures in $\Proc$. Stated another way, we would need to take between a factor of 
$1/\left(1-\sum_{\proc_i \in \Proc}\pai\right)$ and
$1/\left(1-\sum_{\proc_i \in \Proc}\pai\right)^2$
more samples to achieve the same tightening of our confidence bounds, with the rate of tightening depending on the true distributions $\hbs$ and $\hbr$.
For example, imagine that $Q$ contains an equal number of problems about multiplication and about literature, that the agent gives us its procedure $\proc$ for performing multiplication, and that the agent is 33\% accurate at answering questions about literature. Thereafter it suffices to sample only literature questions and each counts in our bounds for average grade as much as $1/(1-0.5)^2 = 4$ questions from the original distribution and in our bounds for the probability of giving ridiculous answers about as much as $1/1-0.5 = 2$ questions, in both cases yielding a bound exactly half as wide. This dramatic improvement in the bound combines two effects. First, our knowledge of $\proc$ eliminates all uncertainty about $\sa$ and $\ra$; second, we devote all samples to $\bA$ rather than devoting a $p_a$ fraction of samples to $\Qp$.

Several extensions and refinements are possible. We have so far described a setting in which all explanations and answers are offered before our analysis occurs. We can straightforwardly construct an online procedure, with the agent offering either individual answers or explanations as it chooses and our bounds updating as we go. Before we have asked any questions, we begin with $\Qp = \emptyset$, $\bA = Q$, and  $\Proc = \emptyset$. We always sample only from $\bA$, storing these samples in $\bS$. When we receive an explanation $\proc_i$, we add it to $\Proc$, subtract the set covered by $\Qpi$ from  $\bA$, and discard any previous samples from $\bS$ that do not pertain to the newly restricted $\bA$ (allowing for the possibility that the agent may later offer an explanation that covers previous individual answers). Theorem \ref{thm:explanations} then applies directly at every point in this process. More demanding is relaxing the assumption that the agent would apply its supplied procedure to all other applicable questions. 
\shortver{We leave the consideration of alternate models to a longer version of this paper but note that the procedure we have introduced here constitutes a sample complexity lower bound on all such models.
}
\longver{
We discuss this in the next section.
}

\includeuncertainty{
\subsection{When Application of Trusted Procedures is Uncertain}
\label{sec:ExplanationUncertainty}

\klb{Here's my proposed new model. As before, the questions are partitioned according to where the procedure could be applied. Each explanation confers confidence that the procedure would be used for some fraction of the relevant questions. You'd proposed that this be either a constant or a constant fraction; I propose more generally that it be a fraction sampled IID from a distribution $f$ with support on $[0,1]$, with *which* questions these are always being chosen uniformly at random. The existing Section 5.1 is then the special case where $f$ is a point mass at 1. Consider first the special case where f is a point mass at some value other than 1; e.g., 0.5. Then the first explanation gives us expected coverage over 50\% of the questions; the second explanation again gives us coverage over 50\%, but 25\% are questions that were already covered by the first explanation and 25\% are new. So it gives us a new 1/4. Now 75\% are covered, so when we get a third explanation on expectation 3/4 * 1/2 of the weight goes to questions that were already covered and 1/4 * 1/2 = 1/8 goes to new questions. In general, let $c_i$ be the fraction of questions covered by the $i$th explanation; the coverage of the first explanation is $0.5$ and of the $i$th explanation is then $0.5*(1-\sum_{j \in \{0, \ldots, i-1\}} c_j)$. We can solve the general recurrence relation in closed form: if the coverage probability is $p_c$, the $i$th element is $p_c * (1-p_c)^{i-1}$.
Overall, this gives us a more principled story behind the harmonic series, with one parameter, $f$. If we change $f$ to a point mass at say 0.2, we'd still converge to probability 1 in the limit: the first explanation would cover 20\%, the second would cover 0.8*0.2 = 16\% new questions, for total coverage of 0.36; the third would cover 0.64*.2 = 12.8\%, for a total of 0.488, and so on. 
And clearly this could be extended to other distributions, where the coverage offered by each explanation is not the same. I wouldn't work through any such examples in the paper, but I think it's nice that the model directly covers such cases and it's easy to see how to work it through.}

Let us now consider a more general setting. As before, we assume that we know the procedure $\proc_i$ that the agent used for a given question $q_i$, but we no longer assume that we are certain that the agent will continue to use $\proc_i$ for the other questions to which it is applicable, $\Qpi$. 
%
Now we have a bounding problem just like our analysis of ridiculousness: the question of whether the agent will apply the procedure on a given, future question from $\Qpi$ is a Bernoulli random variable.
Let $n_{\proc_i}$ denote the number of questions we sample from $\Qpi$, let $\piai$ denote the (unknown) true probability with which the agent applies the procedure on questions from $\Qpi$, and let $\hpiai$ denote our empirical observation of the fraction of questions from $\Qpi$ for which the agent uses $\proc$.

\newcommand{\GGU}{\mbox{\textit{Good-Grade-EUA}}\xspace}
\newcommand{\BGU}{\mbox{\textit{Bad-Grade-EUA}}\xspace}
\newcommand{\GRU}{\mbox{\textit{Good-Rid-EUA}}\xspace}
\newcommand{\BRU}{\mbox{\textit{Bad-Rid-EUA}}\xspace}

\begin{procedure}[Testing Procedure Based on Samples and Explanations with Uncertain Application]
	This procedure takes one argument: a desired maximum failure probability $\delta$.
	\begin{enumerate}
		\item Determine that the agent understands the scope of $Q$ if:
		\begin{description}[labelwidth=2.8cm]
			\itm{\GGU} $\left( \sum_{\proc_i \in \Proc} \pa \cdot L\left(\hpiai,\nai,\dai\right) \cdot s_\alpha
			\right) + \bp \cdot L\left(\hbs,\bn,\dai\right) \geq PG$; and 
		\itm{\GR} $U\left(\hat{r},n,\delta\right) \leq RID.$ 
		\end{description}
		\item Determine that the agent does not understand the scope of $Q$ if:
		\begin{description}[labelwidth=2.8cm]
			\itm{\BGU} $\left( \sum_{\proc_i \in \Proc} \pa \cdot U\left(\hpiai,n_{\proc_i},\daii\right) \cdot s_\alpha\right.$\\ \phantom~~~~~~~~~~~~~~~~~~~~~~~~~~~~~~~~~~$
			\left. + \left(1-U\left(\hpiai,\nai,\daii\right)\right)
			\right) +\, \bp \cdot U\left(\hbs,\bn,\daii\right) \leq PG$; or 
		\itm{\BR} $L\left(\hat{r},n,\delta/2\right) > RID$.
		\end{description}
		\item Otherwise, draw no conclusion; a larger $n$ (or larger $\delta$) is required.
	\end{enumerate}
	\label{def:testing-explanations-uncertain}
\end{procedure}

\begin{theorem}
	If Procedure~\ref{def:testing-explanations-uncertain} determines that an agent does or does not understand a given scope, this conclusion is correct with probability at least $1-\delta$.
	\label{thm:explanations-uncertain}
\end{theorem}
\vspace{-1.5em}
\begin{proof}
	See Appendix~\ref{app:proofs}.
\end{proof}

Let us return to our multiplication and literature example, but now imagine that we are not certain that the agent will apply the correct procedure to all multiplication questions. Let us contrast the three tests we have developed for determining whether the agent's average grade exceeds PG: \GG, \GGE, and \GGU. Consider a case where the average grade is $0.6$ for both categories of tests and where the agent is always observed to follow the multiplication procedure when tested. Table~\ref{table:comparing-tests} shows a comparison of the three tests. \GGU is at a significant disadvantage in two senses: first, it calculates two lower bounds, each of which has half the failure probability of the single lower bounds conducted by \GGE; second, each \GGU's two lower bound also receives half the number of samples. Nevertheless, for all but the smallest sample sizes, \GGU yields a tighter bound overall in this case. Table~\ref{table:comparing-tests-bounds} shows the values of these four lower bounds that are used by the three tests.

\begin{table}[t]
	\centering
\begin{tabular}{rccc}
\toprule
$n$  & $\GG$ & $\GGU$ & $\GGE$ \\
\midrule
10  & 0.236717 & 0.194629 & 0.351177 \\
30  & 0.378821 & 0.382271 & 0.432801 \\
100  & 0.477904 & 0.483170 & 0.504507 \\
300  & 0.529782 & 0.537422 & 0.544709 \\
1,000  & 0.561737 & 0.567658 & 0.569863 \\
3,000  & 0.577984 & 0.581941 & 0.582678 \\
10,000  & 0.587970 & 0.591856 & 0.592961 \\
\bottomrule
\end{tabular}
\vspace{0.6em}
\caption{Comparing the three approaches for lower bounding the agent's average grade taken from the conditions \GG, \GGU, and \GGE. Throughout $s = s_{\proc} = \bs = 0.6$. All three tests were run with the same overall number of samples, $n$. For \GGU, $n_\proc = \bn = n/2$; for \GGE, $n_\proc = 1$ and so $\bn=n-1$. \GG does not reason about explanations so it is simply based on all samples. All tests are run with overall failure probability of $0.05$ (e.g., \GGU has two internal tests, and each is given a failure probability of $0.025$.)}
\label{table:comparing-tests}
\end{table}

\begin{table}[t]
\centering
\begin{tabular}{rcccc}
\toprule
$n$ & $L(0.6,n,0.05)$ & $L(1,n/2,0.025)$ & $L(0.6,n/2,0.025)$ & $L(0.6,n-1,0.05)$ \\
\midrule 
10 & 0.236717 & 0.478176 & 0.102353 & 0.220901 \\
30 & 0.378821 & 0.831567 & 0.265601 & 0.375194 \\
100 & 0.477904 & 0.928878 & 0.409013 & 0.477287 \\
300 & 0.529782 & 0.975707 & 0.489419 & 0.529663 \\
1,000 & 0.561737 & 0.992649 & 0.539726 & 0.561717 \\
3,000 & 0.577984 & 0.997544 & 0.565356 & 0.577981 \\
10,000 & 0.587970 & 0.996318 & 0.585922 & 0.587970 \\
\bottomrule
\end{tabular}
\vspace{.6em}
\caption{Values for the four lower bounds that are used by the three tests in Table~\ref{table:comparing-tests} can be calculated. Specifically, 
$\text{\GG} = L(0.6,n.0.05)$,
$\text{\GGU} = 0.5 \cdot L(1,n/2,0.025) \cdot 0.6 + 0.5 \cdot L(0.6,n/2,0.025)$; 
$\text{\GGE} = 0.5 * 0.6 + L(0.6, n-1, 0.05)$.}
\label{table:comparing-tests-bounds}
\end{table}

The reader may wonder why Procedure~\ref{def:testing-explanations-uncertain} leverages the conditions \GR and \BR from Procedure~\ref{def:testing-samples} rather than deriving new conditions that benefit from having access to explanations. In brief, the former offers tighter bounds than the latter; the issue is discussed in detail in Appendix~\ref{app:uncertainty-ridiculousness}.
}%

\section{Discussion}

This paper addressed the question of how one is to judge whether an agent such as an LLM  understands a subject matter. 
We note that our focus on question answering excludes other behaviors that could commonly be viewed as demonstrating understanding. For example, we could have said that the agent (or our dog) understands our commands based on whether it follows them. Or, we could have said that you show that you understand a person's sorrow with your kind look and comforting embrace. In general, people use language in flexible ways, and terms such as ‘understanding' are used in multiple senses. We focus on question answering since it is the closest in spirit to the discussion of whether computer systems such as LLMs understand, which has been the impetus for this paper. 

\longver{%
A note about the potential danger of omitting ``obvious'' questions. People have vast background---sometimes called ``common sense''---knowledge, which the person formulating the list of questions would not think of including since they are so obvious. There could be the concern that the AI agent might answer the more sophisticated questions well, but would have failed on these obvious questions had they been asked. We're not sure how serious this concern is, since arguably without this background knowledge the agent would eventually fail on the more sophisticated questions. But it is something to watch out for.
}%

The framework discussed here can be expanded in various ways. Some of these were discussed in previous sections, but there are others. For example,
\longver{%
rather than being compared to the expected score, the passing grade could be applied to another norm, such as the max. This is admittedly an edge case, but is applicable, for example, if all questions are open questions in the field; 
if you solved P/NP no further evidence is required to show that you understand the theory of computing.
Or, as another example, 
}%
the framework as discussed is a ``static'' one, in two senses. First, it asks whether an agent in a given state understands something. It does not discuss the dynamics of understanding, for example how an agent  can come to understand things, such as through a series of Socratic questions posed to it.  Second, it assumes a given scope, but one can imagine realistic situations in which an initially specified scope is seemed insufficient and is expanded as a result of interaction with the agent that uncovered new relevant questions that had not been anticipated. Both are interesting topics for further research. 

Still, already as laid out here, the framework is a good basis for discussing the relationship between LLMs and understanding, and the framework can be applied to both the analysis and design of AI agents. From the analysis point of view, the framework suggests that beside testing the AI agent against a fixed test suite such as HELM \citep{HELM} or MMLU \citep{MMLU}, the user of the agent should require that the agent  virtually never provide ridiculous responses. Furthermore, the ridiculous answers need not be entirely specified in advance, which could both be infeasible, and could invite overfitting, but rather can be presented or expanded post-hoc on an ongoing basis. To be sure, this places a burden on the agent builder, who is held accountable to unforeseen future tests, but that does not seem unfair if one is claiming understanding by the agent one is offering. From the design as well as the analysis point of view, the framework suggests two additional things. First, that the agent admit ignorance when it does not have the answer, provided pleading ignorance is not ridiculous. And second, when it does not plead ignorance, that  answers be complemented by explanations, which consist of rigorous, accepted procedures by which the answers were derived (be they a simple database lookup or a more complicated program). 

Clearly, according to the account of understanding laid out in this paper, current AI systems---in particular, LLMs---can't be said to understand nontrivial domains. The systems are unreliable, occasionally downright ridiculous, unpredictable, opaque, and non-explainable. This is not only of intellectual interest but is intimately tied to the practical limitations industry is experiencing in deploying them broadly. 

As we've mentioned, our approach is similar to that of \citet{Turing}, though the details are different; in this connection, see \citet{GroszAIM2012}. But Turing's seminal article is relevant in another way as well.  The jury is out on which set of methods holds the greatest promise in meeting the criteria for understanding laid out here. Some will surely believe all it would take is scale---bigger models, larger and better training data. Others will believe that the path forward involves the so-called neuro-symbolic approach, which combines the epically knowledgeable and creative yet stochastic LLMs with classical methods based on semantically clear data structures and human-interpretable algorithms. Time will tell, not only which approach triumphs, but also whether the goal is attainable at all. Can AI achieve human-level understanding, or will that be forever out of reach? Here we think it's instructive to recall the words of Turing in the same article:
\longver{\begin{quote}}\shortver{``}%
It is customary to offer a grain of comfort, in the form of a statement that some peculiarly human characteristic could never be imitated by a machine. I cannot offer any such comfort, for I believe that no such bounds can be set.%
\shortver{''}\longver{\end{quote}}
We believe this is true in particular of understanding. 

\clearpage
\longver{ 
\section*{Acknowledgements}
We thank Noga Alon for feedback on the probabilistic analyses;  Barak Lenz for pointing out Feynman's ``why magnets attract'' example; Moises Goldszmidt for the observation about the danger of omitting obvious examples; Scott Aaronson for the observation that solving P/NP would be sufficient to demonstrate understanding of the theory of computing; and Chris R\'e and attendants of the Stanford HAI colloquium \citep{HAIShoham}, in which some of the ideas in this paper were discussed, for raising the question of the dynamics of understanding. Many additional thanks to Ernie Davis, Mansi Gupta, Joe Halpern, Percy Liang, Judea Pearl and Danish Pruthi for their useful comments. 
}

\section*{Bibliography}
\bibliography{refs}

\begin{thebibliography}{35}
\providecommand{\natexlab}[1]{#1}
\providecommand{\url}[1]{\texttt{#1}}
\expandafter\ifx\csname urlstyle\endcsname\relax
  \providecommand{\doi}[1]{doi: #1}\else
  \providecommand{\doi}{doi: \begingroup \urlstyle{rm}\Url}\fi

\bibitem[Angluin(1983)]{AngluinProofs}
Dana Angluin.
\newblock How to prove it.
\newblock \emph{ACM SIGACT News}, 15\penalty0 (1):\penalty0 16--17, 1983.

\bibitem[Arcas(2022)]{ArcasDaedalus}
Blaise Aguera~y Arcas.
\newblock Do large language models understand us?
\newblock \emph{Daedalus}, 151\penalty0 (2), 2022.

\bibitem[Bender and Koller(2020)]{BenderKoller}
Emily~M Bender and Alexander Koller.
\newblock Climbing towards {NLU}: On meaning, form, and understanding in the
  age of data.
\newblock In \emph{Proceedings of the Association for Computational
  Linguistics}, pages 5185--5198, 2020.

\bibitem[Bloom et~al.(1956)Bloom, Engelhart, Furst, Hill, and
  Krathwohl]{Bloom1956}
Benjamin~S. Bloom, Max~D. Engelhart, Edward~J. Furst, Walker~H. Hill, and
  David~R. Krathwohl.
\newblock \emph{Taxonomy of educational objectives: The classification of
  educational goals. Vol.1 of Handbook: Cognitive domain.}
\newblock David McKay Company, New York, 1956.

\bibitem[Bommasani et~al.(2022)Bommasani, Liang, and Lee]{HELM}
Rishi Bommasani, Percy Liang, and Tony Lee.
\newblock Language models are changing {AI}: The need for holistic evaluation,
  2022.
\newblock URL \url{https://crfm.stanford.edu/2022/11/17/helm.html}.

\bibitem[Bransford et~al.(2000)Bransford, Brown, and Cocking]{Bransford2000}
John~D. Bransford, Anne~L. Brown, and Rodney~R. Cocking.
\newblock \emph{How People Learn: Brain, Mind, Experience, and School.}
\newblock National Academy Press, Washington DC, 2000.

\bibitem[Bubeck et~al.(2023)Bubeck, Chandrasekaran, Eldan, Gehrke, Horvitz,
  Kamar, Lee, Lee, Li, Lundberg, Nori, Palangi, Ribeiro, and Zhang]{Sparks}
Sébastien Bubeck, Varun Chandrasekaran, Ronen Eldan, Johannes Gehrke, Eric
  Horvitz, Ece Kamar, Peter Lee, Yin~Tat Lee, Yuanzhi Li, Scott Lundberg,
  Harsha Nori, Hamid Palangi, Marco~Tulio Ribeiro, and Yi~Zhang.
\newblock Sparks of artificial general intelligence: Early experiments with
  {GPT-4}, 2023.

\bibitem[Chomsky(2023)]{Chomsky}
Noam Chomsky.
\newblock The false promise of {C}hat{GPT}.
\newblock New York Times, March 2023.
\newblock URL
  \url{https://www.nytimes.com/2023/03/08/opinion/noam-chomsky-chatgpt-ai.html}.

\bibitem[Dewey(1933)]{Dewey1933}
John Dewey.
\newblock \emph{How We Think}.
\newblock Dover Publications, revised edition, 1933.

\bibitem[Feynman(1983)]{FeynmanWhy}
Richard Feynman.
\newblock Magnets.
\newblock Fun To Imagine, {BBC}, 1983.
\newblock URL \url{https://www.youtube.com/watch?v=Dp4dpeJVDxs}.

\bibitem[Garivier and Capp{\'e}(2011)]{garivier2011kl}
Aur{\'e}lien Garivier and Olivier Capp{\'e}.
\newblock The {KL-UCB} algorithm for bounded stochastic bandits and beyond.
\newblock In \emph{Proceedings of the 24th annual conference on learning
  theory}, pages 359--376, 2011.

\bibitem[Grimm(2021)]{sep-understanding}
Stephen Grimm.
\newblock {Understanding}.
\newblock In Edward~N. Zalta, editor, \emph{The {Stanford} Encyclopedia of
  Philosophy}. Metaphysics Research Lab, Stanford University, {S}ummer 2021
  edition, 2021.

\bibitem[Grosz(2012)]{GroszAIM2012}
Barbara Grosz.
\newblock What question would {T}uring pose today?
\newblock \emph{AI Magazine}, 33\penalty0 (4):\penalty0 73--73, 2012.

\bibitem[Hendrycks et~al.(2021)Hendrycks, Burns, Basart, Zou, Mazeika, Song,
  and Steinhardt]{MMLU}
Dan Hendrycks, Collin Burns, Steven Basart, Andy Zou, Mantas Mazeika, Dawn
  Song, and Jacob Steinhardt.
\newblock Measuring massive multitask language understanding, January 2021.
\newblock URL \url{https://arxiv.org/abs/2009.03300}.

\bibitem[Hinton and Ng(2023)]{HintonNg}
Geophrey Hinton and Andrew Ng, 2023.
\newblock URL
  \url{https://www.linkedin.com/posts/andrewyng_had-an-insightful-conversation-with-geoff-activity-7073688821803978752-DO9h/}.

\bibitem[Lattimore and Szepesv{\'a}ri(2020)]{lattimore2020bandit}
Tor Lattimore and Csaba Szepesv{\'a}ri.
\newblock \emph{Bandit algorithms}.
\newblock Cambridge University Press, 2020.

\bibitem[Li et~al.(2023)Li, Liang, Zeng, Chen, Hausman, Sadigh, Levine,
  Fei-Fei, Xia, and Ichter]{li2023chain}
Chengshu Li, Jacky Liang, Andy Zeng, Xinyun Chen, Karol Hausman, Dorsa Sadigh,
  Sergey Levine, Li~Fei-Fei, Fei Xia, and Brian Ichter.
\newblock Chain of code: Reasoning with a language model-augmented code
  emulator, 2023.

\bibitem[Marcus and Davis(2020)]{MarcusDavisMITTR20}
Gary Marcus and Ernest Davis.
\newblock {GPT-3}, bloviator: {Open{AI}’s} language generator has no idea
  what it’s talking about.
\newblock \emph{Technology Review}, 2020.

\bibitem[Marcus and Davis(2023)]{MarcusDavisCACM23}
Gary Marcus and Ernest Davis.
\newblock Large language models like {ChatGPT} say the darnedest things.
\newblock \emph{The Road to AI We Can Trust}, 2023.

\bibitem[Menabrea(1843)]{Lovelace}
Luigi~Federico Menabrea.
\newblock Sketch of the analytical engine invented by {Charles Babbage}. with
  notes by the translator, 1843.
\newblock Translated by {A}da {L}ovelace.

\bibitem[Mitchell and Krakauer(2023)]{MitchellKrakauer}
Melanie Mitchell and David~C. Krakauer.
\newblock The debate over understanding in {AI}'s large language models.
\newblock \emph{Proceedings of the National Academy of Sciences}, 120\penalty0
  (13), 2023.

\bibitem[Pearl(2000)]{PearlCausality}
Judea Pearl.
\newblock \emph{Causality models, reasoning and inference}.
\newblock Cambridge University Press, 2000.

\bibitem[Pearl and Mackenzie(2018)]{PearlWhy}
Judea Pearl and Dana Mackenzie.
\newblock \emph{The {B}ook of {W}hy: {T}he new science of cause and effect}.
\newblock Basic Books, 2018.

\bibitem[Peirce(1878)]{Peirce1878}
Charles~Sanders Peirce.
\newblock How to make our ideas clear.
\newblock \emph{Popular Science Monthly}, 12\penalty0 (Jan.):\penalty0
  286--302, 1878.

\bibitem[Potts et~al.(2021)Potts, Icard, Portelance, Card, Zhou, and
  Etchemendy]{PottsUnderstanding}
Christopher Potts, Thomas Icard, Eva Portelance, Dallas Card, Caitlyn Zhou, and
  John Etchemendy.
\newblock \emph{Philosophy of Understanding}, chapter 2.6.
\newblock Center for Research on Foundation Models (CRFM), Stanford Institute
  for Human-Centered Artificial Intelligence, (HAI), Stanford University, 2021.
\newblock in {O}n the Opportunities and Risks of Foundation Models (P. Liang et
  al.).

\bibitem[Pruthi et~al.(2022)Pruthi, Bansal, Dhingra, Soares, Collins, Lipton,
  Neubig, and Cohen]{pruthi2022evaluating}
Danish Pruthi, Rachit Bansal, Bhuwan Dhingra, Livio~Baldini Soares, Michael
  Collins, Zachary~C Lipton, Graham Neubig, and William~W Cohen.
\newblock Evaluating explanations: How much do explanations from the teacher
  aid students?
\newblock \emph{Transactions of the Association for Computational Linguistics},
  10:\penalty0 359--375, 2022.

\bibitem[Russin et~al.(2020)Russin, O’Reilly, and Bengio]{RussinPFC}
Jacob Russin, Randall~C O’Reilly, and Yoshua Bengio.
\newblock Deep learning needs a prefrontal cortex.
\newblock \emph{Work Bridging AI Cogn Sci}, 107\penalty0 (603-616):\penalty0 1,
  2020.

\bibitem[Searle(1980)]{SearleChinese}
John Searle.
\newblock Minds, brains and programs.
\newblock \emph{Behavioral and Brain Sciences}, 3:\penalty0 417–57, 1980.

\bibitem[Shoham(2023)]{HAIShoham}
Yoav Shoham.
\newblock Understanding understanding ({HAI} seminar), 2023.
\newblock URL \url{https://www.youtube.com/watch?v=ZNi69FHe3y8}.

\bibitem[Turing(1950)]{Turing}
Alan Turing.
\newblock \emph{Computing machinery and intelligence}.
\newblock Mind Volume LIX, Issue 236, 1950.

\bibitem[West et~al.(2023)West, Lu, Dziri, Brahman, Li, Hwang, Jiang, Fisher,
  Ravichander, Chandu, Newman, Koh, Ettinger, and Choi]{WestEtAl}
Peter West, Ximing Lu, Nouha Dziri, Faeze Brahman, Linjie Li, Jena~D. Hwang,
  Liwei Jiang, Jillian Fisher, Abhilasha Ravichander, Khyathi Chandu, Benjamin
  Newman, Pang~Wei Koh, Allyson Ettinger, and Yejin Choi.
\newblock The generative {AI} paradox: ``{W}hat it can create, it may not
  understand''.
\newblock \emph{arXiv preprint arXiv:2311.00059}, 2023.

\bibitem[Wiggins and McTighe(2005)]{UUWiggins}
Grant Wiggins and Jay McTighe.
\newblock Understanding {U}nderstanding.
\newblock In \emph{Understanding by Design}, chapter~2. Vanderbilt University
  Center for Teaching, 2005.

\bibitem[Wikipedia(2024)]{KG}
Wikipedia.
\newblock Knowledge {G}raph, Accessed March 26, 2024.
\newblock URL \url{https://en.wikipedia.org/wiki/Knowledge_graph}.

\bibitem[Wurman(2017)]{UUWurman}
Richard~Saul Wurman.
\newblock \emph{Understanding Understanding}.
\newblock Richard Saul Wurman and Jack Dangermond, 2017.

\bibitem[Xia et~al.(2021)Xia, Lee, Bengio, and Bareinboim]{XiaCausal}
Kevin Xia, Kai-Zhan Lee, Yoshua Bengio, and Elias Bareinboim.
\newblock The causal-neural connection: Expressiveness, learnability, and
  inference.
\newblock \emph{Advances in Neural Information Processing Systems},
  34:\penalty0 10823--10836, 2021.

\end{thebibliography}
\bibliographystyle{plainnat}

\clearpage
\appendix

%

\section{Deferred Technical Material}
\label{app:proofs}

\begin{definition}[Relative entropy between Bernoulli distributions]
	Let $d(\p,\q)$ denote the relative entropy between two Bernoulli distributions having parameters $\p \in [0,1]$ and $\q \in [0,1]$: 
	\[d(\p,\q) = \begin{cases}
		\p \ln\left(\frac{\p}{\q}\right) + (1 - \p) \ln\left(\frac{1 - \p}{1 - \q}\right)  &  \p \in (0,1) \text{~and~} \q \in (0,1) \\
		\ln(1/(1 - \q))  &  \p = 0 \text{~and~} \q \neq 1 \\
		\ln(1/\q) & \p = 1 \text{~and~} \q \neq 0 \\
		\infty & \p \neq 0 \text{~and~} \q \in \{ 0, 1\} \\
		\infty & \p \in \{0,1\} \text{~and~} \q = 1-\p \\
	\end{cases}\]
	\label{def:KL}
\end{definition}

Our proof of Theorem~\ref{thm:samples} relies on five lemmas.
We begin with a Chernoff bound from the literature, following \citet{lattimore2020bandit} but using notation aligned with the rest of our paper and stating a slightly stronger version of the result than is presented in their Corollary 10.4: specifically,  restricting only to bounded random variables rather than to Bernoulli random variables. \citeauthor{lattimore2020bandit} explain why this stronger version of their Corollary 10.4 also holds in their Notes 10.1 and 10.2 on page 140, leveraging a result due to \citet{garivier2011kl}. Observe that this lemma (counter-intuitively) applies $d(\p,\q)$ to non-Bernoulli distributions.

\begin{lemma}[Chernoff bound following Corollary 10.4 and Note 10.1 of \citet{lattimore2020bandit}]
	Let $X_1, \ldots X_n$ be a sequence of independent and identically distributed random variables for which $X_t \in [0,1]$. Let $\mu$ denote each variable's expected value, and let $\hat{\mu} = \frac{1}{n}\sum_{t=1}^n X_t$ be the sample mean. For $v \geq 0$,
	define
	$U'(v) = \max\{\q \in [0, 1] : d(\hat{\mu}, \q) \leq v\}$ and
	$L'(v) = \min\{\q \in [0, 1] : d(\hat{\mu}, \q) \leq v\}.$
	Then $P(\mu \geq U'(v)) \leq \exp(-nv)$ and $P(\mu \leq L'(v)) \leq \exp(-nv)$.
	\label{lemma:Chernoff}
\end{lemma}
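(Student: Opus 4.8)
The plan is to derive this KL-form deviation bound by the classical Cram\'er--Chernoff method: reduce the two-sided statement about the interval endpoints $U',L'$ to a one-sided large-deviation tail, and then control the moment-generating function of a bounded variable by that of a Bernoulli variable with the same mean. This last reduction is precisely the strengthening of Corollary~10.4 that \citet{lattimore2020bandit} attribute (in their Notes~10.1--10.2) to \citet{garivier2011kl}, so one could also simply invoke those references; but I would prefer to give the argument directly.

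First I would fix the geometry of the feasible set. For fixed $\hat{\mu}$ the map $\q\mapsto d(\hat{\mu},\q)$ is convex, vanishes at $\q=\hat{\mu}$, and is increasing on $[\hat{\mu},1]$ and decreasing on $[0,\hat{\mu}]$; hence $\{\q : d(\hat{\mu},\q)\le v\}$ is a closed interval $[L'(v),U'(v)]$ containing $\hat{\mu}$, which both identifies $U',L'$ as its endpoints and shows $U'(v)\ge\hat{\mu}\ge L'(v)$. Consequently the event $\{\mu\ge U'(v)\}$ forces $\mu\ge\hat{\mu}$ together with $d(\hat{\mu},\mu)\ge d(\hat{\mu},U'(v))=v$, so by monotonicity of $d(\cdot,\mu)$ below $\mu$ it is contained in an event $\{\hat{\mu}\le a\}$ with $a\le\mu$ and $d(a,\mu)=v$ (boundary cases such as $a=0$ or $U'(v)=1$ being read off through the limiting conventions of Definition~\ref{def:KL}). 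The claim thus reduces to the one-sided bounds $P(\hat{\mu}\le a)\le\exp(-n\,d(a,\mu))$ for $a\le\mu$, and symmetrically $P(\hat{\mu}\ge a)\le\exp(-n\,d(a,\mu))$ for $a\ge\mu$.

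Next I would prove this one-sided tail. By the exponential Markov inequality, for $\lambda\ge0$ we have $P(\hat{\mu}\ge a)\le\exp(-\lambda n a)\,\big(\mathbb{E}\,e^{\lambda X_1}\big)^{n}$, where independence lets the joint MGF factor. The key step is to dominate the per-variable MGF: since $x\mapsto e^{\lambda x}$ is convex and $X_1\in[0,1]$, we have $e^{\lambda X_1}\le (1-X_1)+X_1 e^{\lambda}$ pointwise, and taking expectations yields $\mathbb{E}\,e^{\lambda X_1}\le 1-\mu+\mu e^{\lambda}$, the MGF of a Bernoulli$(\mu)$ variable. Substituting and optimizing gives $P(\hat{\mu}\ge a)\le\exp\!\big(-n\sup_{\lambda\ge0}\{\lambda a-\ln(1-\mu+\mu e^{\lambda})\}\big)$, and a short calculation identifies this Legendre transform with $d(a,\mu)$ for $a\ge\mu$. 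The lower tail follows by applying the same argument to $1-X_1$, whose mean is $1-\mu$, together with the symmetry $d(a,\mu)=d(1-a,1-\mu)$.

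The main obstacle is the Bernoulli-domination step: it is exactly what upgrades the result from Bernoulli summands (the form of Corollary~10.4 as originally stated) to arbitrary $[0,1]$-valued summands, and it is the only place where both convexity of the exponential and boundedness of the support are essential. A secondary nuisance is the bookkeeping of boundary cases—$\hat{\mu}\in\{0,1\}$, $a\in\{0,1\}$, or the feasible interval touching an endpoint—where $d$ must be interpreted via the limiting values of Definition~\ref{def:KL} and where the strict monotonicity used to locate $U',L'$ degenerates.
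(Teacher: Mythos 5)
Your proposal is correct, but it differs from the paper in a structural way: the paper does not prove this lemma at all. It imports the statement from Corollary~10.4 of \citet{lattimore2020bandit}, noting that their Notes~10.1 and 10.2 (via \citet{garivier2011kl}) justify the extension from Bernoulli summands to arbitrary $[0,1]$-valued summands, and the appendix proofs of Lemmas~\ref{lemma:GG}--\ref{lemma:BR} simply invoke the lemma as a black box. What you have done is reconstruct the proof behind that citation, and your reconstruction is the standard and correct one: the reduction of the random-endpoint events $\{\mu \geq U'(v)\}$ and $\{\mu \leq L'(v)\}$ to one-sided tails $P(\hat{\mu} \leq a)$ with $d(a,\mu)=v$ is the usual argument exploiting convexity and unimodality of $d(\hat{\mu},\cdot)$; the pointwise domination $e^{\lambda X} \leq (1-X) + X e^{\lambda}$ on $[0,1]$, giving $\mathbb{E}\,e^{\lambda X} \leq 1-\mu+\mu e^{\lambda}$, is exactly the content of the Garivier--Capp\'e strengthening the paper cites; and your Legendre-transform computation correctly identifies $\sup_{\lambda \geq 0}\{\lambda a - \ln(1-\mu+\mu e^{\lambda})\} = d(a,\mu)$ for $a \geq \mu$, with the lower tail following from the symmetry $d(a,\mu)=d(1-a,1-\mu)$ applied to $1-X_t$. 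The trade-off is the expected one: the paper's citation is shorter and delegates all boundary bookkeeping to the reference, while your version is self-contained and makes visible exactly where boundedness and convexity enter. One caveat worth recording if you write this out in full: the degenerate corner $\mu=1$ (where $X_t=1$ almost surely, $\hat{\mu}=1$, and $U'(v)=1$, so the non-strict event $\{\mu \geq U'(v)\}$ occurs with probability one) shows the statement needs its limiting conventions read carefully at the endpoints --- this corner is inherited from the cited statement itself, and your explicit flagging of the boundary cases is the right instinct, but it deserves a sentence rather than a parenthesis.
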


\begin{lemma}
If \GG holds, then with probability at least $1-\delta$, $s \geq PG$.
\label{lemma:GG}
\end{lemma}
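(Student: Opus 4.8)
The plan is to apply the Chernoff bound of Lemma~\ref{lemma:Chernoff} directly to the sequence of score random variables. Recall that \GG is the condition $L(\hat{s},n,\delta) \geq PG$, where $\hat{s} = \frac{1}{n}\sum_{i=1}^n s_i$ is the empirical mean of the scores $s_i = S(q_i, ans(q_i)) \in [0,1]$, and $s = \mathbb{E}_{q \sim \Delta_Q} S(q, ans(q))$ is the true mean. The key observation is that the $s_i$ are i.i.d.\ (since the questions $q_i$ are drawn independently from $\Delta_Q$) and bounded in $[0,1]$, so Lemma~\ref{lemma:Chernoff} applies with $\mu = s$ and $\hat\mu = \hat s$.

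The main step is to match the quantities in our statement to those in the lemma. First I would set $v = \ln(1/\delta)/n$, so that $\exp(-nv) = \delta$. With this choice, the lemma's function $L'(v) = \min\{\q \in [0,1] : d(\hat s, \q) \leq \ln(1/\delta)/n\}$ coincides exactly with our $L(\hat s, n, \delta)$ as defined in Equation~\eqref{eq:L}. The lemma then gives $P(s \leq L'(v)) = P\!\left(s \leq L(\hat s, n, \delta)\right) \leq \exp(-nv) = \delta$. Equivalently, with probability at least $1-\delta$ we have $s > L(\hat s, n, \delta)$, and hence $s \geq L(\hat s, n, \delta)$.

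To finish, I would invoke the hypothesis that \GG holds, i.e.\ $L(\hat s, n, \delta) \geq PG$. Combining this deterministic inequality with the high-probability bound $s \geq L(\hat s, n, \delta)$ yields $s \geq PG$ with probability at least $1-\delta$, which is exactly the claim.

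The only subtlety — and the thing I would be careful about rather than the genuinely ``hard'' part — is the direction of the event and the handling of the boundary. The lemma is stated as a bound on the probability of the one-sided tail $\{s \leq L'(v)\}$, so I must make sure I am lower-bounding $s$ (not upper-bounding it) and that \GG is the \emph{lower-bound} condition matching $L$ rather than $U$; the $U$/upper-tail version is what would be used for the \BG direction. I would also note explicitly that $\hat s$ is a random quantity, so $L(\hat s, n, \delta)$ is itself random, but this causes no difficulty because Lemma~\ref{lemma:Chernoff} already states its conclusion in terms of $\hat\mu$, and the subsequent comparison with the fixed threshold $PG$ is deterministic given that \GG holds. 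No further approximation of $L$ is needed since we use it symbolically, not numerically.
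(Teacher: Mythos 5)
Your proof is correct and follows essentially the same route as the paper's: apply the Chernoff bound of Lemma~\ref{lemma:Chernoff} to the i.i.d.\ scores $s_i \in [0,1]$ with $v = \ln(1/\delta)/n$, identify $L'(v)$ with $L(\hat{s},n,\delta)$, and intersect the resulting high-probability event $s > L(\hat{s},n,\delta)$ with the hypothesis \GG. Your closing remarks about the tail direction and the randomness of $L(\hat{s},n,\delta)$ are sound and match the paper's (implicit) handling of these points.
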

\begin{proof}
Each $s_i$ is a independent and identically distributed random variable on the interval $[0,1]$; $s$ is the mean of the distribution from which each $s_i$ was sampled, and $\hat{s}$ is our sample estimate of $s$. So, from Lemma~\ref{lemma:Chernoff}, 
$$P(s \leq L'(v)) \leq \exp(-nv).$$
The event $s > L'(v)$ is complementary to the event $s \leq L'(v)$ so $$P(s > L'(v)) \geq 1 - \exp(-nv).$$
Substituting in $v = \ln(1/(\delta))/n$ and simplifying,
\begin{equation}
	P(s > L'(\ln(1/\delta)/n)) \geq 1-\delta.
	\label{eq:chernoff-L'}
\end{equation}
Appealing to Equation~(\ref{eq:L}), we can rewrite $L'(\ln(1/\delta)/n) = L(\hat{s},n,\delta)$. Then, Inequality~(\ref{eq:chernoff-L'}) tells us that with probability no less than $1-\delta$,
\begin{equation}
	s > L(\hat{s},n,\delta).
	\label{eq:s-gt-L}
\end{equation}
If \GG holds,
\begin{equation}
	L(\hat{s},n,\delta) \geq PG.
	\label{eq:GG}
\end{equation}
Intersecting Inequalities~(\ref{eq:s-gt-L}) and (\ref{eq:GG}) we obtain that with probability at least $1-\delta$, $s > PG.$ 
Thus, with probability at least $1-\delta$, $s \geq PG.$
\end{proof}

\begin{lemma}
If \BG holds, then with probability at least $1-\delta/2$, $s < PG.$
\label{lemma:BG}
\end{lemma}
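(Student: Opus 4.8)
The plan is to mirror the proof of Lemma~\ref{lemma:GG} almost verbatim, replacing the lower-tail half of the Chernoff bound with the upper-tail half and carrying a failure budget of $\delta/2$ throughout in place of $\delta$. As in that proof, each $s_i$ is an i.i.d.\ random variable supported on $[0,1]$ with mean $s$ and sample mean $\hat{s}$, so Lemma~\ref{lemma:Chernoff} applies directly with $\mu = s$ and $\hat{\mu} = \hat{s}$.

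First I would invoke the upper-tail statement of Lemma~\ref{lemma:Chernoff}, namely $P(s \geq U'(v)) \leq \exp(-nv)$, and pass to the complementary event to get $P(s < U'(v)) \geq 1 - \exp(-nv)$. Next I would substitute $v = \ln(1/(\delta/2))/n = \ln(2/\delta)/n$, so that $\exp(-nv) = \delta/2$, yielding $P\left(s < U'(\ln(2/\delta)/n)\right) \geq 1 - \delta/2$. Appealing to Equation~\eqref{eq:U}, I would then rewrite $U'(\ln(1/(\delta/2))/n) = U(\hat{s}, n, \delta/2)$, so that with probability at least $1-\delta/2$ we have $s < U(\hat{s}, n, \delta/2)$.

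Finally, I would chain this high-probability inequality with the hypothesis: if \BG holds then $U(\hat{s}, n, \delta/2) < PG$ deterministically, and intersecting this with the event $s < U(\hat{s}, n, \delta/2)$ gives $s < PG$ with probability at least $1-\delta/2$, as required.

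There is essentially no hard step here, since the statement is the exact dual of Lemma~\ref{lemma:GG}; the only points demanding care are bookkeeping. The first is correctly tracking strict versus non-strict inequalities: the Chernoff complement gives the strict bound $s < U'(v)$, and \BG supplies a strict $U(\hat{s},n,\delta/2) < PG$, so the conclusion is already the strict $s < PG$ with no need to relax (unlike Lemma~\ref{lemma:GG}, which had to pass from $>$ to $\geq$). The second is the appearance of $\delta/2$ rather than $\delta$, which reflects the multiple-hypothesis correction needed because Step~2 of Procedure~\ref{def:testing-samples} concludes non-understanding whenever \emph{either} \BG or \BR holds, so each of the two tests must be allocated only half of the permitted failure probability.
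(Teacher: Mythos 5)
Your proof is correct and matches the paper's own argument step for step: the upper-tail Chernoff bound from Lemma~\ref{lemma:Chernoff}, the substitution $v = \ln(1/(\delta/2))/n$, the identification $U'(\ln(1/(\delta/2))/n) = U(\hat{s},n,\delta/2)$ via Equation~\eqref{eq:U}, and the intersection with the deterministic hypothesis \BG. Your side remarks on strict inequalities and on the $\delta/2$ allocation for the union bound in Step~2 of Procedure~\ref{def:testing-samples} are also accurate.
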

\begin{proof}
	From Lemma~\ref{lemma:Chernoff}, 
	$$P(s \geq U'(v)) \leq \exp(-nv).$$
	The event $s < U'(v)$ is complementary to the event $s \geq U'(v)$ so $$P(s < U'(v)) \geq 1 - \exp(-nv).$$
	Substituting in $v = \ln(1/(\delta/2))/n$ and simplifying,
	\begin{equation}
		P(s < U'(\ln(1/(\delta/2))/n)) \geq 1-\delta/2.
		\label{eq:chernoff-U'}
	\end{equation}
	Appealing to Equation~(\ref{eq:U}), we can rewrite $U'(\ln(1/(\delta/2))/n) = U(\hat{s},n,\delta/2)$. Then, Inequality~(\ref{eq:chernoff-U'}) tells us that with probability no less than $1-\delta/2$,
	\begin{equation}
		s < U(\hat{s},n,\delta/2).
		\label{eq:s-lt-U}
	\end{equation}
	If \BG holds,
	\begin{equation}
		U(\hat{s},n,\delta/2) < PG
		\label{eq:BG}
	\end{equation}
	Intersecting Inequalities~(\ref{eq:s-lt-U}) and (\ref{eq:BG}) we obtain that with probability at least $1-\delta/2$, $s < PG.$ 
\end{proof}

\begin{lemma}
If \GR holds, then with probability at least $1-\delta$, $r \leq RID$. 
\label{lemma:GR}
\end{lemma}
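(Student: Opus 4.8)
The plan is to mirror the proof of Lemma~\ref{lemma:GG}, but working with the upper bound $U'$ (as in the proof of Lemma~\ref{lemma:BG}) rather than the lower bound, and with the full failure budget $\delta$. First I would observe that each indicator $r_i$ is an i.i.d.\ random variable taking values in $\{0,1\} \subseteq [0,1]$, that $r$ is the mean of the underlying distribution, and that $\hat{r}$ is our sample mean of that distribution; this is precisely the setting of Lemma~\ref{lemma:Chernoff}, which (counterintuitively, as the lemma's statement emphasizes) applies to such bounded variables even though they are not assumed Bernoulli.

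Next I would apply the upper-tail half of Lemma~\ref{lemma:Chernoff} to obtain $P(r \geq U'(v)) \leq \exp(-nv)$, and pass to the complementary event to get $P(r < U'(v)) \geq 1 - \exp(-nv)$. Substituting $v = \ln(1/\delta)/n$ yields $P\bigl(r < U'(\ln(1/\delta)/n)\bigr) \geq 1-\delta$, and by the definition of $U$ in Equation~\eqref{eq:U} we may rewrite $U'(\ln(1/\delta)/n) = U(\hat{r},n,\delta)$. Hence with probability at least $1-\delta$ we have $r < U(\hat{r},n,\delta)$.

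Finally, invoking the hypothesis that \GR holds---i.e.\ $U(\hat{r},n,\delta) \leq RID$---and intersecting this deterministic inequality with the high-probability bound just derived gives $r < RID$, and a fortiori $r \leq RID$, with probability at least $1-\delta$, as claimed.

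The argument presents no genuine obstacle: it is the exact dual of Lemma~\ref{lemma:GG}, with the one-sided upper Chernoff bound replacing the lower one. The only points requiring any care are bookkeeping ones. First, \GR is given the full $\delta$ (rather than $\delta/2$) because, like \GG, it is one of two conditions that must \emph{both} hold in Step~1 of Procedure~\ref{def:testing-samples}, so no union-bound correction is needed---in contrast to \BR, which is the dual to \BG and will carry $\delta/2$. Second, one should note explicitly that Lemma~\ref{lemma:Chernoff} is applied to the bounded variables $r_i$ directly, so that the conclusion requires no distributional assumption beyond independent sampling from $\Delta_Q$.
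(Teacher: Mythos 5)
Your proof is correct and takes essentially the same approach as the paper's: the paper likewise applies the upper-tail Chernoff bound (``an argument just as in Lemma~\ref{lemma:BG}'') with the full failure budget $\delta$ to conclude $r < U(\hat{r},n,\delta)$ with probability at least $1-\delta$, then intersects with the hypothesis $U(\hat{r},n,\delta) \leq RID$ to obtain $r \leq RID$. Your explicit write-out of the substitution $v = \ln(1/\delta)/n$ and the remark about why \GR carries $\delta$ rather than $\delta/2$ are both consistent with the paper's reasoning.
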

\begin{proof}
Each measurement of whether or not the agent gave a ridiculous answer, denoted $r_i$, is an identically distributed Bernoulli random variable (each is either $1$ or $0$ depending on whether or not the agent received score zero for its answer to the sampled question); $r$ is the mean of the Bernoulli distribution from which each $r_i$ was sampled (the probability according to which the agent gives a ridiculous answer), and $\hat{r}$ is our sample estimate of $r$. We can therefore make an argument just as in Lemma~\ref{lemma:BG}, concluding analogously to Inequality~\eqref{eq:s-lt-U} that with probability no less than $1-\delta$,
\begin{equation}
	r < U(\hat{r},n,\delta).
\label{eq:r-lt-U}
\end{equation}
If \GR holds,
\begin{equation}
	U(\hat{r},n,\delta) \leq RID.
\label{eq:GR}
\end{equation}
Intersecting Inequalities~(\ref{eq:r-lt-U}) and (\ref{eq:GR}) we obtain that with probability at least $1-\delta$, $r < RID.$ Thus, with probability at least $1-\delta$, $r \leq RID.$
\end{proof}

\begin{lemma}
If \BR holds, then with probability at least $1-\delta/2$, $r > RID$.
\label{lemma:BR}
\end{lemma}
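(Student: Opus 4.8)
The plan is to follow exactly the template established by the preceding four lemmas, combining the lower-bound half of the Chernoff bound (as in the proof of Lemma~\ref{lemma:GG}) with the halved failure probability used for the negative conclusions (as in Lemma~\ref{lemma:BG}). In other words, this statement is the ridiculousness analogue of Lemma~\ref{lemma:BG}, just as Lemma~\ref{lemma:GR} was the ridiculousness analogue of Lemma~\ref{lemma:GG}.

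First I would observe, exactly as in the proof of Lemma~\ref{lemma:GR}, that each $r_i$ is an identically distributed Bernoulli random variable taking value $1$ precisely when the agent's answer to the $i$th sampled question scores zero, that $r$ is the mean of the common Bernoulli distribution, and that $\hat{r}$ is the corresponding sample mean. Hence Lemma~\ref{lemma:Chernoff} applies directly with $\mu = r$ and $\hat{\mu} = \hat{r}$.

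Next I would invoke the lower-tail statement of Lemma~\ref{lemma:Chernoff}, namely $P(r \leq L'(v)) \leq \exp(-nv)$, pass to the complementary event to obtain $P(r > L'(v)) \geq 1 - \exp(-nv)$, and substitute $v = \ln(1/(\delta/2))/n$. Appealing to Equation~\eqref{eq:L} to rewrite $L'(\ln(1/(\delta/2))/n) = L(\hat{r},n,\delta/2)$, this yields that with probability at least $1-\delta/2$ we have $r > L(\hat{r},n,\delta/2)$. Finally, if \BR holds then by definition $L(\hat{r},n,\delta/2) > RID$; intersecting this deterministic inequality with the high-probability event from the previous step gives $r > RID$ with probability at least $1-\delta/2$, as required.

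There is essentially no genuine obstacle here: the argument is a direct transcription of Lemma~\ref{lemma:BG} with $s$ replaced by $r$, $PG$ replaced by $RID$, and the upper bound $U$ replaced by the lower bound $L$. The only point worth flagging is the appearance of $\delta/2$ rather than $\delta$. This choice is not forced by the present lemma in isolation, but it is what later allows Step 2 of Procedure~\ref{def:testing-samples} to take a union bound over its two failure modes (\BG and \BR) while still achieving overall failure probability at most $\delta$ in Theorem~\ref{thm:samples}.
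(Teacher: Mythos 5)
Your proof is correct and matches the paper's own argument exactly: the paper likewise applies the lower-tail Chernoff bound as in Lemma~\ref{lemma:GG} with failure probability $\delta/2$ to conclude $r > L(\hat{r},n,\delta/2)$, and then intersects with the deterministic inequality $L(\hat{r},n,\delta/2) > RID$ given by \BR. Your closing remark about $\delta/2$ being chosen to accommodate the union bound over \BG and \BR in Theorem~\ref{thm:samples} is also precisely the justification the paper gives for the asymmetry.
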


\begin{proof}
Likewise, we can make an argument just as in Lemma~\ref{lemma:GG}, concluding analogously to Inequality~\eqref{eq:s-gt-L} that with probability no less than $1-\delta/2$,
\begin{equation}
	r > L(\hat{r},n,\delta/2).
	\label{eq:r-gt-L}
\end{equation}
If \BR holds,
\begin{equation}
	L(\hat{r},n,\delta/2) > RID.
	\label{eq:BR}
\end{equation}
Intersecting Inequalities~(\ref{eq:r-gt-L}) and (\ref{eq:BR}) we obtain that with probability at least $1-\delta/2$, $r > RID.$
\end{proof}

We are now ready to prove Theorem~\ref{thm:samples}.

\setcounterref{theorem}{thm:samples}
\addtocounter{theorem}{-1}

\begin{theorem}
	If Procedure~\ref{def:testing-samples} determines that an agent does or does not understand a given scope, this conclusion is correct with probability at least $1-\delta$.
\end{theorem}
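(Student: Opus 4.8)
The plan is to reduce the theorem to the four lemmas already established (Lemmas~\ref{lemma:GG}--\ref{lemma:BR}, themselves built on the Chernoff bound of Lemma~\ref{lemma:Chernoff}), but to organize the argument around the \emph{ground truth} rather than around the procedure's output. Recall from Definition~\ref{def:understanding} that the agent genuinely understands exactly when $s \geq PG$ \emph{and} $r \leq RID$, and genuinely fails to understand exactly when $s < PG$ \emph{or} $r > RID$. Procedure~\ref{def:testing-samples} can err in only two ways: it outputs ``understands'' while the truth is ``does not understand'' (call this $E_1$), or it outputs ``does not understand'' while the truth is ``understands'' ($E_2$). Since the true values of $s$ and $r$ are fixed (if unknown), at most one of $E_1,E_2$ is possible for a given instance, so it suffices to show $P(E_1) \leq \delta$ and $P(E_2) \leq \delta$ separately.

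First I would bound $P(E_1)$. Here the procedure has output ``understands'', so both \GG and \GR hold, yet the truth satisfies $s < PG$ or $r > RID$. The key observation—and the reason Step~1 needs no multiple-hypothesis correction—is that the entire error may be charged to whichever true condition is violated. If $s < PG$, then on the event that \GG holds we have $s < PG \leq L(\hat{s},n,\delta)$, placing the error inside the Chernoff failure event $\{s \leq L(\hat{s},n,\delta)\}$, whose probability is at most $\delta$ (this is precisely the containment used to prove Lemma~\ref{lemma:GG}). Symmetrically, if $r > RID$, the error lies in $\{r \geq U(\hat{r},n,\delta)\}$, bounded by $\delta$ via Lemma~\ref{lemma:GR}. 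Because at least one condition must be violated when the truth is ``does not understand'', and only one violated test need be accounted for, no union over the two tests arises and $P(E_1) \leq \delta$ regardless of which (or both) conditions fail.

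Next I would bound $P(E_2)$, where the procedure outputs ``does not understand''—so \BG or \BR holds—while in truth $s \geq PG$ and $r \leq RID$. Now \emph{both} true conditions hold, so either spurious test can be the source of error and a union bound is unavoidable; this is exactly why Step~2 uses $\delta/2$. The proof of Lemma~\ref{lemma:BG} bounds the joint event that \BG holds while $s \geq PG$ by $\delta/2$, and that of Lemma~\ref{lemma:BR} bounds the joint event that \BR holds while $r \leq RID$ by $\delta/2$. Since $E_2$ is contained in the union of these two joint events, a union bound gives $P(E_2) \leq \delta/2 + \delta/2 = \delta$. Combining both cases, whenever the procedure draws a conclusion it is correct with probability at least $1-\delta$.

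The hard part is conceptual rather than computational: getting the asymmetric bookkeeping of $\delta$ versus $\delta/2$ right. The naive temptation is to union-bound the two tests in Step~1 as well, which would only yield $2\delta$; the crucial point I must make precise is that conditioning on a fixed ground truth restricts which tests can produce a false conclusion—only the test checking an actually-violated condition is ``at risk'' in the ``understands'' branch, whereas both tests are at risk in the ``does not understand'' branch. Everything else then follows routinely by translating each lemma's conditional ``if the test passes then the truth holds with probability $1-\delta$'' statement into a joint bound on the event (test passes) $\cap$ (truth contradicts it), which is immediate from the set containments already exhibited inside the respective lemma proofs.
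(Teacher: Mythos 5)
Your proof is correct and takes essentially the same route as the paper's: it reduces the theorem to Lemmas~\ref{lemma:GG}--\ref{lemma:BR}, charges a false ``understands'' verdict to the single Chernoff failure event of whichever condition the fixed ground truth violates (so Step~1 needs no multiple-hypothesis correction), and union-bounds the two $\delta/2$ failure events for Step~2. If anything, your ground-truth-centered bookkeeping makes the Step-1 argument more precise than the paper's looser ``intersection of failures'' phrasing, but the decomposition and key lemmas are identical.
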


\begin{proof}
%
To show that an agent understands a given scope, we must demonstrate that both $s \geq PG$ and $r \leq RID$.
By Lemma~\ref{lemma:GG}, if \GG holds, $s < PG$ with probability at most $\delta$. Likewise, by Lemma~\ref{lemma:GR}, if \GR holds then $r > RID$ with probability at most $\delta$. 
We call the event where one of these conditions holds but the desired inequality on the corresponding true distribution does not hold a {failure}. The event where \emph{both} failures occur is the intersection of the events in which each failure occurs, and so its probability is no greater than the maximum of the two failure probabilities. Each of these probabilities is bounded by $\delta$, so we can conclude that, if both \GG and \GR both hold, $(s \geq PG$ and $r \leq RID)$ will also hold with probability no less than $1- \delta$.

To show that the agent does \emph{not} understand a given scope, it is sufficient to demonstrate either that $s < PG$ or that $r > RID$. 
Our procedure draws this conclusion if at least one of \BG and \BR holds.  
By Lemma~\ref{lemma:BG}, if \BG holds, then with probability at least $1-\delta/2$, $s < PG.$
By Lemma~\ref{lemma:BR}, if \BR holds, then with probability at least $1-\delta/2$, $r > RID$.
By the union bound, the probability that at least one of these two conditions is satisfied while the corresponding condition on the underlying true distribution does not (a failure) is no more than the sum of the probabilities that each condition fails individually.\footnote{Thus, it was not necessary to make $\deltaPG = \deltaOR = \delta/2$; we could have picked any positive values for $\deltaPG$ and $\deltaOR$ that sum to $\delta$ or less.}
Thus, if Procedure~\ref{def:testing-samples} determines that an agent does not understand a given scope, the condition $(s < PG$ or $r > RID)$ holds with probability at least $1-(\delta/2 + \delta/2) = 1-\delta$.
\end{proof}

\setcounterref{theorem}{thm:explanations}
\addtocounter{theorem}{-1}

\begin{theorem}
	If Procedure~\ref{def:testing-explanations} determines that an agent does or does not understand a given scope, this conclusion is correct with probability at least $1-\delta$.
\end{theorem}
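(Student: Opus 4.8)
The plan is to mirror the proof of Theorem~\ref{thm:samples} almost verbatim, the one new ingredient being an affine change of variables that converts confidence bounds on the conditional mean over $\bA$ into bounds on the unconditional quantities $s$ and $r$. The starting observation is that, since the sets $\Qpi$ are pairwise disjoint and each procedure $\proc_i$ scores the known constant $\sai$ on all of $\Qpi$, the law of total expectation gives
\begin{equation}
s \;=\; \E_{q\sim\Delta_Q}\, S(q,ans(q)) \;=\; \Bigl(\sum_{\proc_i\in\Proc}\pai\,\sai\Bigr) + \bp\,\bs,
\label{eq:plan-decomp}
\end{equation}
where $\bs$ is the true average score on $\bA$ under the renormalized distribution $\Delta_{\bA}$ and $\bp = 1-\sum_i\pai$ is the mass of $\bA$. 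Under the section's standing assumptions (the agent truly used $\proc_i$ and would apply it to every question in $\Qpi$, with the $\pai,\sai$ known), the bracketed term carries no randomness; all uncertainty lives in the estimate $\hbs$ of $\bs$. The companion decomposition $r = \sum_i\pai\rai + \bp\,\br$ holds for ridiculousness, and conditions \GRE and \BRE are built from it.

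First I would reuse Lemmas~\ref{lemma:GG}--\ref{lemma:BR} essentially unchanged. The $\bn$ answers in $\bS$ are drawn i.i.d.\ from $\Delta_{\bA}$, each score lies in $[0,1]$, and each ridiculousness indicator is Bernoulli, so these samples satisfy exactly the hypotheses of Lemma~\ref{lemma:Chernoff} under the substitution $n\mapsto\bn$, $\hat s\mapsto\hbs$, $\hat r\mapsto\hbr$. This yields four statements: with probability at least $1-\delta$, $\bs > L(\hbs,\bn,\delta)$; with probability at least $1-\delta/2$, $\bs < U(\hbs,\bn,\delta/2)$; with probability at least $1-\delta$, $\br < U(\hbr,\bn,\delta)$; and with probability at least $1-\delta/2$, $\br > L(\hbr,\bn,\delta/2)$.

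The key step is to push each of these through the affine map in \eqref{eq:plan-decomp}. Because $\bp\geq 0$, multiplying an inequality on $\bs$ by $\bp$ preserves its direction, and adding the known constant $\sum_i\pai\sai$ preserves it as well; hence, for instance, with probability at least $1-\delta$ we obtain $s > \sum_i\pai\sai + \bp\,L(\hbs,\bn,\delta)$, whose right-hand side is precisely the left-hand side of condition \GGE. Combining the two, \GGE implies $s\geq PG$ with probability at least $1-\delta$, and the parallel arguments give that \GRE implies $r\leq RID$ with probability at least $1-\delta$, \BGE implies $s<PG$ with probability at least $1-\delta/2$, and \BRE implies $r>RID$ with probability at least $1-\delta/2$. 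The degenerate case $\bp=0$ is immediate, since then $s$ and $r$ are known exactly and the four conditions become deterministic checks.

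Finally I would assemble the two directions exactly as in Theorem~\ref{thm:samples}. For the ``understands'' conclusion, \GGE and \GRE both hold, and the event that either of the two corresponding guarantees fails is the intersection of the individual failure events, so its probability is at most the maximum of the two, i.e.\ at most $\delta$; thus $s\geq PG$ and $r\leq RID$ jointly with probability at least $1-\delta$. For the ``does not understand'' conclusion, at least one of \BGE, \BRE holds, and a union bound over the two failure events---each of probability at most $\delta/2$---gives the $1-\delta$ guarantee. The only genuine obstacle is the affine pass-through: one must be explicit that the deterministic offset $\sum_i\pai\sai$ consumes none of the failure budget and that the nonnegativity of $\bp$ keeps every inequality pointing the correct way. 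Everything else is a transcription of the earlier argument.
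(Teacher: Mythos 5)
Your proposal is correct and takes essentially the same route as the paper's proof: the identical decompositions $s=\bigl(\sum_i p_{\alpha_i}\,s_{\alpha_i}\bigr)+\bar{p}\,\bar{s}$ and $r=\bigl(\sum_i p_{\alpha_i}\,r_{\alpha_i}\bigr)+\bar{p}\,\bar{r}$, the same reuse of Lemmas~\ref{lemma:GG}--\ref{lemma:BR} under the substitutions $n\mapsto\bar{n}$, $\hat{s}\mapsto\hat{\bar{s}}$, $\hat{r}\mapsto\hat{\bar{r}}$, and the same final assembly as in the proof of Theorem~\ref{thm:samples} (including the max-of-failure-probabilities step for the positive direction, which is precisely how the paper argues it). Your explicit remarks that the deterministic offset consumes no failure budget, that $\bar{p}\geq 0$ preserves inequality directions, and that the case $\bar{p}=0$ is degenerate are minor clarifications the paper leaves implicit.
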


\begin{proof}
When we sample from $Q$, we obtain a sample from each $\Qpi$ with probability $\pai$ and from $\bA$ with probability $\bp = 1-\sum_i \pai$. First, let us consider average score. Denote the true score on $\bA$ as $\bs$.
Because the true score $s$ is the expectation over the scores of all questions in $Q$ and expectation is linear, $s$ is a convex combination of each $\sai$ and of $\bs$,
\begin{equation}
	s = \left( \sum_{\proc_i \in \Proc} \pai \cdot \sai \right) + \bp \cdot \bs.
	\label{eq:s-def}
\end{equation}
Consider each set of questions in turn. For questions from each $\Qpi$, our beliefs about the average score do not depend on our samples at all: we know that the agent's true average score is $\sai$.

The samples $\bs_1, \ldots, \bs_{\bn}$ we have taken from $\bA$ are independent random variables on $[0,1]$, so the argument in the proof of Lemma~\ref{lemma:GG} applies directly, substituting $n = \bn$, $\hat{s} = \hbs$ and $s = \bs$. Thus, with probability at least $1-\delta$, 
\begin{equation}
	\bs > L(\hbs,\bn,\delta).
\label{eq:GG2} 
\end{equation}
Intersecting Inequalities~\eqref{eq:s-def} and \eqref{eq:GG2}, with probability at least $1-\delta$,
\begin{equation}
	s > \left( \sum_{\proc_i \in \Proc} \pai \cdot \sai \right) + \bp \cdot L(\hbs,\bn,\delta)
	\label{eq:GG3}
\end{equation}
If \GGE holds, 
\begin{equation}
\left(\sum_{\proc_i \in \Proc} \pai \cdot \sai \right) + \bp \cdot L(\hbs,n,\delta) \geq PG
\label{eq:GGE}
\end{equation}
Intersecting Inequalities \eqref{eq:GG3} and \eqref{eq:GGE}, if \GGE holds then with probability at least $1-\delta$, $s > PG$.

We can make an analogous argument about \BGE. If \BGE holds,
\begin{equation}
\left(\sum_{\proc_i \in \Proc} \pai \cdot \sai \right) + \bp \cdot U(\hbs,\bn,\delta/2) < PG
\label{eq:BGE}
\end{equation}
As above, we can make the same argument as in the proof of Lemma~\ref{lemma:BG} but with $n = \bn$, $\hat{s} = \hbs$, and $s = \bs$, obtaining $s < U(\hat{s},n,\delta/2).$ Intersecting with Inequality~\eqref{eq:BGE}, we obtain that with probability at least $1-\delta/2$, $s < PG$.

Now let us consider the probability that the agent will return a ridiculous answer. Denote the true probability that the agent will return a ridiculous answer on $\bA$ as $\br$. By the Law of Total Probability, $r$ is a convex combination of each $\rai$ and of $\br$,
\begin{equation}
	r = \left( \sum_{\proc_i \in \Proc} \pai \cdot \rai \right) + \bp \cdot \br.
	\label{eq:r-def}
\end{equation}
As before, we can repeat the argument in the proof of Lemma~\ref{lemma:GR}, substituting $n = \bn$, $\hat{r} = \hbr$, and $r = \br$, obtaining $\br < U(\hbr,\bn,\delta).$ Intersecting with Inequality~\eqref{eq:r-def}, with probability at least $1-\delta/2$,
\begin{equation}
	r \leq \left( \sum_{\proc_i \in \Proc} \pai \cdot \rai \right) + \bp \cdot U(\hbr,\bn,\delta/2).
	\label{eq:r-def2}
\end{equation}
If \GRE holds,
\begin{equation}
\left(\sum_{\proc_i \in \Proc} \pai \cdot \rai \right) + \bp \cdot U(\hbr,\bn,\delta/2) \leq RID.
\label{eq:GRE}
\end{equation}
Intersecting Inequalities \eqref{eq:r-def2} and \eqref{eq:GRE}, if \GRE holds then with probability at least $1-\delta$, $r < RID$.

Finally, we can make an analogous argument about \BRE. If \BRE holds,
\begin{equation}
\left(\sum_{\proc_i \in \Proc} \pai \cdot \sai \right) + \bp \cdot L(\hbr,\bn,\delta) > RID
\label{eq:BRE}
\end{equation}
As above, we can make the same argument as in the proof of Lemma~\ref{lemma:BR}, substituting $n = \bn$, $\hat{r} = \hbr$, and $r = \br$, obtaining $\br > L(\hbr,\bn,\delta/2).$ Intersecting with Inequality~\eqref{eq:BRE}, we obtain that with probability at least $1-\delta/2$, $r > RID$.

We have now proven results exactly analogous to those of Lemmas~\ref{lemma:GG}, \ref{lemma:BG}, \ref{lemma:GR}, and \ref{lemma:BR} for \GGE, \BGE, \GRE, and \BGE, the four conditions used in Procedure~\ref{def:testing-samples}. We can thus follow the same argument as in the proof of Theorem~\ref{thm:samples} (substituting \GGE for \GG, etc., and appealing to these new results rather than each of the corresponding lemmas just mentioned) to show that if Procedure~\ref{def:testing-explanations} determines that an agent does or does not understand a given scope, this conclusion is correct with probability at least $1-\delta$.
\end{proof}

\includeuncertainty{
\setcounterref{theorem}{thm:explanations-uncertain}
\addtocounter{theorem}{-1}

\begin{theorem}
	If Procedure~\ref{def:testing-explanations-uncertain} determines that an agent does or does not understand a given scope, this conclusion is correct with probability at least $1-\delta$.
\end{theorem}
\vspace{-1.5em}
\newcommand{\snotpi}{s_{\neg\proc_i}}
\begin{proof}
First, consider average score. As in the case where application of trusted procedures is certain, when we sample from $Q$, we obtain a sample from each $\Qpi$ with probability $\pai$ and from $\bA$ with probability $\bp = 1-\sum_i \pai$. 
However, in the setting we now consider, when facing a question from $\Qpi$, the agent only uses procedure $\proc_i$ with some unknown probability $\piai$, for which we have empirical estimate $\hpiai$. Furthermore, when the agent faces a question from $\Qpi$ and \emph{does not} use $\proc_i$, it realizes some (unknown) expected score $\snotpi$.
Because the true score $s$ is the expectation over the scores of all questions in $Q$ and expectation is linear, $s$ is the following convex combination:
\begin{equation}
	s = \left( \sum_{\proc_i \in \Proc} \pai \cdot \left(\piai \cdot \sai + (1-\piai) \cdot \snotpi \right) \right) + \bp \cdot \bs.
	\label{eq:s-u-def}
\end{equation}
Each score $\snotpi$ is no less than 0, so we can substitute $\snotpi=0$ into Equation~\eqref{eq:s-u-def} to obtain
\begin{equation}
	s \geq \left( \sum_{\proc_i \in \Proc} \pai \cdot \piai \cdot \sai\right) + \bp \cdot \bs.
	\label{eq:s-u-def2}
\end{equation}
We can bound the last term in Inequality~\eqref{eq:s-u-def2} in exactly the same way as in the argument leading up to Inequality~\eqref{eq:GG2}, so with probability at least $1-\delta'$,
\begin{equation}
	\bs > L(\hbs,\bn,\delta').
	\label{eq:GGU-bs} 
\end{equation}
Making the same argument as in Lemma~\ref{lemma:BR} we can derive an analogue of Inequality~\eqref{eq:r-gt-L}	: for each $\proc_i$, with probability at least $1-\delta'$, 
\begin{equation}
	\piai > L(\hpiai,\nai,\delta').
	\label{eq:piai}
\end{equation}
We could intersect any instance of Inequality~\eqref{eq:piai} (corresponding to a single $\proc_i$) with Inequality~\eqref{eq:s-u-def2} to obtain a weaker bound on $s$ that holds with probability at least $1-\delta'$; this amounts to shifting probability mass away from the fraction of questions receiving score $\sai$ and towards the fraction receiving score $0$. Likewise we could intersect Inequality~\eqref{eq:GGU-bs} with Inequality~\eqref{eq:s-u-def2}, obtaining a bound that holds with probability at least $1-\delta'$.
By the union bound, the probability that at least one of these lower bounds fails is upper bounded by the sum of their failure probabilities. There are $|\Proc|$ instances of Inequality~\eqref{eq:piai} and a single instance of Inequality~\eqref{eq:GGU-bs}, so let $\delta' = \dai$. Then, with probability at least $1-\delta$, 
\begin{equation}
	s \geq \left( \sum_{\proc_i \in \Proc} \pai \cdot L\left(\hpiai,\nai,\dai\right) \cdot \sai\right) + \bp \cdot L\left(\hbs,\bn,\dai\right).
	\label{eq:s-u-def3}
\end{equation}
If \GGU holds,
\begin{equation}
\label{eq:GGU}
\end{equation}
Intersecting Inequalities~\eqref{eq:s-u-def3} and \eqref{eq:GGU}, if \GGU holds then with probability at least $1-\delta$, $s \geq PG$.

For reasons that we discuss after this proof, we ignore demonstrations that the agent uses the procedure in assessing ridiculousness. Instead, Procedure~\ref{def:testing-explanations-uncertain} determines that the agent understands a given scope if both \GGU and \GR hold, where \GR is the same condition used in Procedure~\ref{def:testing-samples}. Lemma~\ref{lemma:GR} thus applies directly: if \GR holds, with probability at least $1-\delta$, $r \leq RID$. Following the argument in the proof of Theorem~\ref{thm:samples}, the probability that both \GGU and \GR hold but that $s < PG$ and/or $r > RID$ is no greater than the probability that either \GGU or \GR fails independently. We have just shown that each condition fails with probability no greater than $1-\delta$, so if both \GG and \GR hold, the agent understands the given scope with probability at least $1-\delta$.

Now let us consider the scenario where Procedure~\ref{def:testing-explanations-uncertain} determines that the agent does not understand.
Each score $\snotpi$ is no greater than 1, so we can substitute $\snotpi=1$ into Equation~\eqref{eq:s-u-def} to obtain
\begin{equation}
	s \leq \left( \sum_{\proc_i \in \Proc} \pai \cdot \left(\piai \cdot \sai + (1-\piai) \right) \right) + \bp \cdot \bs.
	\label{eq:s-u-def5}
\end{equation}
We can bound the last term in Inequality~\eqref{eq:s-u-def5} in exactly the same way as in the argument leading up to Inequality~\eqref{eq:GG2}, so with probability at least $1-\delta'$,
\begin{equation}
	\bs < U(\hbs,\bn,\delta').
	\label{eq:GRU-bs2} 
\end{equation}Making the same argument as in Lemma~\ref{lemma:GR} we can derive an analogue of Inequality~\eqref{eq:r-lt-U}: for each $\proc_i$, with probability at least $1-\delta'$, 
\begin{equation}
	\piai < U(\hpiai,\nai,\delta').
	\label{eq:piai2}
\end{equation}

We can intersect any instance of Inequality~\eqref{eq:piai2} with Inequality~\eqref{eq:s-u-def3} or intersect Inequality~\eqref{eq:GGU-bs} with Inequality~\eqref{eq:s-u-def3}, in either case obtaining a bound that holds with probability at least $1-\delta'$.
By the union bound, the probability that at least one of these lower bounds fails is upper bounded by the sum of their failure probabilities. There are $|\Proc|$ instances of Inequality~\eqref{eq:piai2} and a single instance of Inequality~\eqref{eq:GRU-bs2}. We will reserve half of our failure probability for our second test \BR, so let $\delta' =  \daii$. Then, with probability at least $1-\delta/2$, 
\begin{multline}
	s \leq \left( \sum_{\proc_i \in \Proc} \pai \cdot \left(U\left(\hpiai,\nai,\daii\right) \cdot \sai + \left(1-U\left(\hpiai,\nai,\daii\right)\right) \right) \right) \\ + \bp \cdot U\left(\hbs,\bn,\daii\right).
	\label{eq:s-u-def4}
\end{multline}
If \BGU holds,
\begin{multline}
\left( \sum_{\proc_i \in \Proc} \pai \cdot \left( U\left(\hpiai,n_{\proc_i},\daii\right) \cdot s_\alpha + \left(1-U\left(\hpiai,\nai,\daii\right)\right)\right)
\right) \\ +\, \bp \cdot U\left(\hbs,\bn,\daii\right) \leq PG	\label{eq:BGU}
\end{multline}
Intersecting Inequalities~\eqref{eq:s-u-def4} and \eqref{eq:BGU}, if \BGU holds then with probability at least $1-\delta/2$, $s \leq PG$.

Procedure~\ref{def:testing-explanations-uncertain} determines that the agent does not understand a given scope if either \BGU or \BR holds, where \BR is the same condition used in Procedure~\ref{def:testing-samples}. Lemma~\ref{lemma:BR} thus applies directly: if \BR holds, with probability at least $1-\delta/2$, $r \geq RID$. By the union bound, the probability that at least one of \GGU and \GR holds but that $(s \geq PG$ and $r \leq RID)$ is no greater than the sum of the probabilities that \GGU and \GR fail independently, $\delta = \delta/2 + \delta/2$. Thus, if either \BGU or \BR holds, the agent fails to understand the given scope with probability at least $1-\delta$.
\end{proof}

\section{On Testing Ridiculousness with Uncertain Explanations}
\label{app:uncertainty-ridiculousness}

Why does Procedure~\ref{def:testing-explanations-uncertain} use \GR and \BR rather than new tests \GRU and \BRU analogous to \GGU and \BGU? Of course, we could derive such tests and they would indeed be bounds on $r$; however, they would be looser bounds than \GR and \BR in the common case where the probability of giving a ridiculous answer is very close to zero. Let us see why. 
First, let us state \BR:
\begin{equation}
	U(0,n,\delta) \leq RID.	
\end{equation}
Now let us consider \GRU; \BRU is analogous. The test we would derive is
\begin{multline}
	\left( \sum_{\proc_i \in \Proc} \pai \cdot \left( L\left(\hpiai,n_{\proc_i},\dai\right) \cdot r_\alpha + \left(1-L\left(\hpiai,\nai,\dai\right)\right)\right)
\right) \\ +\, \bp \cdot U\left(\hbr,\bn,\dai\right) \leq RID
\label{eq:GRU1}
\end{multline}
For simplicity, let us analyze the case where there is only one procedure $\proc$. Let us consider the case where $r_\proc = 0$ (using the procedure never generates a ridiculous answer), $\hpia = 1$ (the procedure is always observed to be used), and $\hbr = 0$ (no ridiculous answers are ever observed). 
This, or something close to it, is what we should expect as the common case for the application of our tests in practice. Specialized to this case, Inequality~\eqref{eq:GRU1} states
\begin{equation}
	\pa \cdot \left(1-L\left(1,\na,\delta/2\right)\right)
	 + (1-\pa) \cdot U\left(0,\bn,\delta/2\right) \leq RID
	\label{eq:GRU2}
\end{equation}
For every $\p,n,\delta$, it is always the case that $1-L(\p,n,\delta) = U(1-\p,n,\delta)$, due to symmetry in the definitions of relative entropy for Bernoulli random variables and $L$ and $U$. 
(Intuitively:  $1-L$ gives the same answer $U$ would have given if the Bernoulli variable's outcomes are relabelled so that $0$ becomes $1$ and vice versa.) 
So Inequality~\eqref{eq:GRU2} can be rewritten as
\begin{equation}
	\pa \cdot U\left(0,\na,\delta/2\right)
	+ (1-\pa) \cdot U\left(0,\bn,\delta/2\right) \leq RID
	\label{eq:GRU3}
\end{equation}
We know that $n = \na + \bn$. Assuming that neither $\na$ nor $\bn$ is zero, $n>\na$ and $n > \bn$. $U$ is strictly monotonically decreasing in its second parameter (the bounds tighten as $n$ increases), so 
\begin{equation}
	\pa \cdot U(0,n,\delta/2) + (1-\pa) \cdot U(0,n,\delta/2) <
	\pa \cdot U(0,\na,\delta/2) + (1-\pa) \cdot U(0,\bn,\delta/2) 
\end{equation}
Simplifying,
\begin{equation}
	U(0,n,\delta/2) < \pa \cdot U(0,\na,\delta/2) + (1-\pa) \cdot U(0,\bn,\delta/2) 
\end{equation}
Furthermore, $U$ is also monotonically decreasing in its third parameter (the bounds tighten as the failure probability grows). Thus,
\begin{equation}
	U(0,n,\delta) < U(0,n,\delta/2) < \pa \cdot U(0,\na,\delta/2) + (1-\pa) \cdot U(0,\bn,\delta/2) 
\end{equation}
We have just shown that even when there is only one procedure $\proc$,  if $r_\proc = 0$, $\hpia = 1$, and $\hbr = 0$, the bound in \GR is tighter than the bound in \GRU.
}

\section{On Chernoff vs Hoeffding Bounds}
\label{app:Hoeffding-Chernoff}

In our testing procedures, we leveraged upper- and lower-bounding functions $U$ and $L$ derived from the Chernoff bound, which lack closed-form expressions and hence require numerical approximation. However, we could alternatively have derived our four tests of average grade and ridiculousness via the Hoeffding bound, which does have a closed-form expression. Specifically,
each $s_i$ is an identically distributed random variable restricted to the interval $[0,1]$ and each $r_i$ is a Bernoulli random variable. We could thus have used Hoeffding's inequality to bound the gap between empirical and true distributions in both cases. Consider the simplest setting of Procedure~\ref{def:testing-samples}. Here we could have concluded e.g., that $$\hat{s} - \sqrt{\frac{-\ln(\deltaPG)}{n}} \leq s$$ with probability at least $1-\deltaPG$ and similarly that $$\hat{r} +\sqrt{\frac{-\ln(\deltaOR/2)}{n}} \geq r$$ with probability at least $1-\deltaOR/2$. 

\begin{table}[t]
\begin{tabular}{rcccc}
\toprule
$n$ & $0.9 - \sqrt{\frac{-\ln(0.05)}{n}}$ & $0.9 + \sqrt{\frac{-\ln(0.025)}{n}}$ & $0.5 - \sqrt{\frac{-\ln(0.05)}{n}}$ & $0.5 + \sqrt{\frac{-\ln(0.025)}{n}}$ \\
\midrule
10 & 0.352667 & 1.507361 & -0.047333 & 1.107361 \\
100 & 0.726918 & 1.092065 & 0.326918 & 0.692065 \\
1,000 & 0.845267 & 0.960736 & 0.445267 & 0.560736 \\
10,000 & 0.882692 & 0.919206 & 0.482692 & 0.519206 \\
100,000 & 0.894527 & 0.906074 & 0.494527 & 0.506074 \\
1,000,000 & 0.898269 & 0.901921 & 0.498269 & 0.501921 \\
10,000,000 & 0.899453 & 0.900607 & 0.499453 & 0.500607 \\
\bottomrule
\end{tabular}
\vspace{.6em}	
\caption{Example values for bounding ${s}$ in Procedure~\ref{def:testing-samples}, $\delta = 0.05$, leveraging the Hoeffding bound instead of the Chernoff bound. Compare to corresponding values in Table~\ref{table:bounding-s-Chernoff}. As in that table, the two pairs of columns represent the confidence intervals around observed empirical averages of $\hat{s} = 0.9$ and $\hat{s} = 0.5$ respectively. Observe that the width of the confidence interval derived from the Hoeffding bound is the same in both cases. When values are greater than 1 or less than 0, the Hoeffding bound is uninformative.
}
\label{table:bounding-s-Hoeffding}
\end{table}

\begin{table}[t] 
	\centering
	\begin{tabular}{rcccccc}
		\toprule
$n$ & $0 + \sqrt{\frac{-\ln(0.05)}{n}}$ & $3/n + \sqrt{\frac{-\ln(0.05)}{n}}$ & $0.01 - \sqrt{\frac{-\ln(0.025)}{n}}$ \\
\midrule
10 & 0.547333 & 0.847333 & -0.597361 \\
100 & 0.173082 & 0.203082 & -0.182065 \\
1,000 & 0.054733 & 0.057733 & -0.050736 \\
10,000 & 0.017308 & 0.017608 & -0.009206 \\
100,000 & 0.005473 & 0.005503 & 0.003926 \\
1,000,000 & 0.001731 & 0.001734 & 0.008079 \\
10,000,000 & 0.000547 & 0.000548 & 0.009393 \\
		\bottomrule
	\end{tabular}
\vspace{.6em}	
\caption{Example values for bounding $r$ in Procedure~\ref{def:testing-samples} with $\delta = 0.05$, leveraging the Hoeffding bound instead of the Chernoff bound. Compare to corresponding values in Table~\ref{table:bounding-r-Chernoff}. When values are greater than 1 or less than 0, the Hoeffding bound is uninformative.
}
\label{table:bounding-r-Hoeffding}
\end{table}

However, tests based on these bounds are somewhat worse for tests of average grade and lamentably difficult to satisfy for tests of ridiculousness. See Tables~\ref{table:bounding-s-Hoeffding} and \ref{table:bounding-r-Hoeffding}, which are analogues of Tables~\ref{table:bounding-s-Chernoff} and \ref{table:bounding-r-Chernoff} using the Hoeffding bound. 
Consider a setting in which the agent has given no ridiculous answers. How many samples would we need to establish confidence that the true probability of giving a ridiculous answer has fallen below $0.00052$? We can infer the answer from Table~\ref{table:bounding-r-Hoeffding}: over ten million samples. In contrast, our test derived from the Chernoff bound required fewer than 10,000 samples.
		
Why do the two inequalities produce such different bounds, particularly when testing ridiculousness? The issue is that $RID$ is very close to zero and, in the event that our test passes, $\hat{r}$ will be too. This is a problem for the Hoeffding bound when applied to Bernoulli random variables, which turns out to be extremely loose when the true probability is close to 0 or 1. Indeed, the Hoeffding bound always scales with the square root of the number of samples. The Chernoff bound does the same when the true probability is 0.5, but scales almost linearly with the number of samples when the true probability is close to 0 or 1.

\section{Wrong Answers and Bogus Explanations: An Example}
\label{TemporalExample}
Below is a session with ChatGPT on November 19th, 2023. ChatGPT provides wrong answers, and bogus explanations of the answers. (This is not a critique specifically of ChatGPT, which is used simply as an example.)

\texttt{
\\
\textbf{You}\\
Consider events A, B, C. B happened during A. C started after A did, and ended before B did. Which started first - B or C?\\
\\
\textbf{ChatGPT}\\
Let's break down the information:\\
•	Event A started.\\
•	Event B happened during A.\\
•	Event C started after A did.\\
•	Event C ended before B did.\\
Given this information, we can conclude that:\\
1.	A started first.\\
2.	C started after A.\\
3.	C ended before B.\\
Therefore, in the sequence of events, C started after A, and since C ended before B, it means that C started after B as well. The chronological order would be B, C, and then A. \\
 \\
\textbf{You}\\
Are you sure? Remember that events have duration.\\
\\
\textbf{ChatGPT}\\
Apologies for any confusion in my previous responses. Let's consider the duration of events:\\
•	Event A started.\\
•	Event B happened during A.\\
•	Event C started after A did and ended before B did.\\
Given the duration of events, we can refine the chronological order:\\
1.	A started.\\
2.	C started after A and ended before B.\\
So, considering the duration, C started after A and ended before B. The correct chronological order is A, C, and then B. I appreciate your patience, and thank you for pointing out the importance of event duration. If you have any more questions or need further clarification, feel free to ask.\\
ChatGPT can make mistakes. Consider checking important information.
}

\shortver{
\newpage
\section*{NeurIPS Paper Checklist}

\begin{enumerate}
	
	\item {\bf Claims}
	\item[] Question: Do the main claims made in the abstract and introduction accurately reflect the paper's contributions and scope?
	\item[] Answer: \answerYes{} 
	\item[] Justification: Claims are backed up by theorems. 
	\item[] Guidelines:
	\begin{itemize}
		\item The answer NA means that the abstract and introduction do not include the claims made in the paper.
		\item The abstract and/or introduction should clearly state the claims made, including the contributions made in the paper and important assumptions and limitations. A No or NA answer to this question will not be perceived well by the reviewers. 
		\item The claims made should match theoretical and experimental results, and reflect how much the results can be expected to generalize to other settings. 
		\item It is fine to include aspirational goals as motivation as long as it is clear that these goals are not attained by the paper. 
	\end{itemize}
	
	\item {\bf Limitations}
	\item[] Question: Does the paper discuss the limitations of the work performed by the authors?
	\item[] Answer: \answerYes{} 
	\item[] Justification: Limitations are discussed throughout, but particularly in the Discussion section. 
	\item[] Guidelines:
	\begin{itemize}
		\item The answer NA means that the paper has no limitation while the answer No means that the paper has limitations, but those are not discussed in the paper. 
		\item The authors are encouraged to create a separate "Limitations" section in their paper.
		\item The paper should point out any strong assumptions and how robust the results are to violations of these assumptions (e.g., independence assumptions, noiseless settings, model well-specification, asymptotic approximations only holding locally). The authors should reflect on how these assumptions might be violated in practice and what the implications would be.
		\item The authors should reflect on the scope of the claims made, e.g., if the approach was only tested on a few datasets or with a few runs. In general, empirical results often depend on implicit assumptions, which should be articulated.
		\item The authors should reflect on the factors that influence the performance of the approach. For example, a facial recognition algorithm may perform poorly when image resolution is low or images are taken in low lighting. Or a speech-to-text system might not be used reliably to provide closed captions for online lectures because it fails to handle technical jargon.
		\item The authors should discuss the computational efficiency of the proposed algorithms and how they scale with dataset size.
		\item If applicable, the authors should discuss possible limitations of their approach to address problems of privacy and fairness.
		\item While the authors might fear that complete honesty about limitations might be used by reviewers as grounds for rejection, a worse outcome might be that reviewers discover limitations that aren't acknowledged in the paper. The authors should use their best judgment and recognize that individual actions in favor of transparency play an important role in developing norms that preserve the integrity of the community. Reviewers will be specifically instructed to not penalize honesty concerning limitations.
	\end{itemize}
	
	\item {\bf Theory Assumptions and Proofs}
	\item[] Question: For each theoretical result, does the paper provide the full set of assumptions and a complete (and correct) proof?
	\item[] Answer: \answerYes{} 
	\item[] Justification: The formal model is described, and a full proof is given for each theoretical result.  
	\item[] Guidelines:
	\begin{itemize}
		\item The answer NA means that the paper does not include theoretical results. 
		\item All the theorems, formulas, and proofs in the paper should be numbered and cross-referenced.
		\item All assumptions should be clearly stated or referenced in the statement of any theorems.
		\item The proofs can either appear in the main paper or the supplemental material, but if they appear in the supplemental material, the authors are encouraged to provide a short proof sketch to provide intuition. 
		\item Inversely, any informal proof provided in the core of the paper should be complemented by formal proofs provided in appendix or supplemental material.
		\item Theorems and Lemmas that the proof relies upon should be properly referenced. 
	\end{itemize}
	
	\item {\bf Experimental Result Reproducibility}
	\item[] Question: Does the paper fully disclose all the information needed to reproduce the main experimental results of the paper to the extent that it affects the main claims and/or conclusions of the paper (regardless of whether the code and data are provided or not)?
	\item[] Answer: \answerNA{} 
	\item[] Justification: The paper does not include experiments. 
	\item[] Guidelines:
	\begin{itemize}
		\item The answer NA means that the paper does not include experiments.
		\item If the paper includes experiments, a No answer to this question will not be perceived well by the reviewers: Making the paper reproducible is important, regardless of whether the code and data are provided or not.
		\item If the contribution is a dataset and/or model, the authors should describe the steps taken to make their results reproducible or verifiable. 
		\item Depending on the contribution, reproducibility can be accomplished in various ways. For example, if the contribution is a novel architecture, describing the architecture fully might suffice, or if the contribution is a specific model and empirical evaluation, it may be necessary to either make it possible for others to replicate the model with the same dataset, or provide access to the model. In general. releasing code and data is often one good way to accomplish this, but reproducibility can also be provided via detailed instructions for how to replicate the results, access to a hosted model (e.g., in the case of a large language model), releasing of a model checkpoint, or other means that are appropriate to the research performed.
		\item While NeurIPS does not require releasing code, the conference does require all submissions to provide some reasonable avenue for reproducibility, which may depend on the nature of the contribution. For example
		\begin{enumerate}
			\item If the contribution is primarily a new algorithm, the paper should make it clear how to reproduce that algorithm.
			\item If the contribution is primarily a new model architecture, the paper should describe the architecture clearly and fully.
			\item If the contribution is a new model (e.g., a large language model), then there should either be a way to access this model for reproducing the results or a way to reproduce the model (e.g., with an open-source dataset or instructions for how to construct the dataset).
			\item We recognize that reproducibility may be tricky in some cases, in which case authors are welcome to describe the particular way they provide for reproducibility. In the case of closed-source models, it may be that access to the model is limited in some way (e.g., to registered users), but it should be possible for other researchers to have some path to reproducing or verifying the results.
		\end{enumerate}
	\end{itemize}

	\item {\bf Open access to data and code}
	\item[] Question: Does the paper provide open access to the data and code, with sufficient instructions to faithfully reproduce the main experimental results, as described in supplemental material?
	\item[] Answer: \answerNA{} 
	\item[] Justification: The paper does not include experiments requiring code. 
	\item[] Guidelines:
	\begin{itemize}
		\item The answer NA means that paper does not include experiments requiring code.
		\item Please see the NeurIPS code and data submission guidelines (\url{https://nips.cc/public/guides/CodeSubmissionPolicy}) for more details.
		\item While we encourage the release of code and data, we understand that this might not be possible, so “No” is an acceptable answer. Papers cannot be rejected simply for not including code, unless this is central to the contribution (e.g., for a new open-source benchmark).
		\item The instructions should contain the exact command and environment needed to run to reproduce the results. See the NeurIPS code and data submission guidelines (\url{https://nips.cc/public/guides/CodeSubmissionPolicy}) for more details.
		\item The authors should provide instructions on data access and preparation, including how to access the raw data, preprocessed data, intermediate data, and generated data, etc.
		\item The authors should provide scripts to reproduce all experimental results for the new proposed method and baselines. If only a subset of experiments are reproducible, they should state which ones are omitted from the script and why.
		\item At submission time, to preserve anonymity, the authors should release anonymized versions (if applicable).
		\item Providing as much information as possible in supplemental material (appended to the paper) is recommended, but including URLs to data and code is permitted.
	\end{itemize}

	\item {\bf Experimental Setting/Details}
	\item[] Question: Does the paper specify all the training and test details (e.g., data splits, hyperparameters, how they were chosen, type of optimizer, etc.) necessary to understand the results?
	\item[] Answer: \answerNA{} 
	\item[] Justification:  The paper does not include experiments. 
	\item[] Guidelines:
	\begin{itemize}
		\item The answer NA means that the paper does not include experiments.
		\item The experimental setting should be presented in the core of the paper to a level of detail that is necessary to appreciate the results and make sense of them.
		\item The full details can be provided either with the code, in appendix, or as supplemental material.
	\end{itemize}
	
	\item {\bf Experiment Statistical Significance}
	\item[] Question: Does the paper report error bars suitably and correctly defined or other appropriate information about the statistical significance of the experiments?
	\item[] Answer: \answerNA{} 
	\item[] Justification:  The paper does not include experiments. 
	\item[] Guidelines:
	\begin{itemize}
		\item The answer NA means that the paper does not include experiments.
		\item The authors should answer "Yes" if the results are accompanied by error bars, confidence intervals, or statistical significance tests, at least for the experiments that support the main claims of the paper.
		\item The factors of variability that the error bars are capturing should be clearly stated (for example, train/test split, initialization, random drawing of some parameter, or overall run with given experimental conditions).
		\item The method for calculating the error bars should be explained (closed form formula, call to a library function, bootstrap, etc.)
		\item The assumptions made should be given (e.g., Normally distributed errors).
		\item It should be clear whether the error bar is the standard deviation or the standard error of the mean.
		\item It is OK to report 1-sigma error bars, but one should state it. The authors should preferably report a 2-sigma error bar than state that they have a 96\% CI, if the hypothesis of Normality of errors is not verified.
		\item For asymmetric distributions, the authors should be careful not to show in tables or figures symmetric error bars that would yield results that are out of range (e.g. negative error rates).
		\item If error bars are reported in tables or plots, The authors should explain in the text how they were calculated and reference the corresponding figures or tables in the text.
	\end{itemize}
	
	\item {\bf Experiments Compute Resources}
	\item[] Question: For each experiment, does the paper provide sufficient information on the computer resources (type of compute workers, memory, time of execution) needed to reproduce the experiments?
	\item[] Answer: \answerNA{} 
	\item[] Justification:  The paper does not include experiments. 
	\item[] Guidelines:
	\begin{itemize}
		\item The answer NA means that the paper does not include experiments.
		\item The paper should indicate the type of compute workers CPU or GPU, internal cluster, or cloud provider, including relevant memory and storage.
		\item The paper should provide the amount of compute required for each of the individual experimental runs as well as estimate the total compute. 
		\item The paper should disclose whether the full research project required more compute than the experiments reported in the paper (e.g., preliminary or failed experiments that didn't make it into the paper). 
	\end{itemize}
	
	\item {\bf Code Of Ethics}
	\item[] Question: Does the research conducted in the paper conform, in every respect, with the NeurIPS Code of Ethics \url{https://neurips.cc/public/EthicsGuidelines}?
	\item[] Answer: \answerYes{} 
	\item[] Justification: We have read the code of ethics. 
	\item[] Guidelines:
	\begin{itemize}
		\item The answer NA means that the authors have not reviewed the NeurIPS Code of Ethics.
		\item If the authors answer No, they should explain the special circumstances that require a deviation from the Code of Ethics.
		\item The authors should make sure to preserve anonymity (e.g., if there is a special consideration due to laws or regulations in their jurisdiction).
	\end{itemize}

	\item {\bf Broader Impacts}
	\item[] Question: Does the paper discuss both potential positive societal impacts and negative societal impacts of the work performed?
	\item[] Answer: \answerYes{} 
	\item[] Justification: Given that our work has a philosophical bent, discussion of broader impact is woven through much of the paper, particular Sections 1, 2 and 6. 
	\item[] Guidelines:
	\begin{itemize}
		\item The answer NA means that there is no societal impact of the work performed.
		\item If the authors answer NA or No, they should explain why their work has no societal impact or why the paper does not address societal impact.
		\item Examples of negative societal impacts include potential malicious or unintended uses (e.g., disinformation, generating fake profiles, surveillance), fairness considerations (e.g., deployment of technologies that could make decisions that unfairly impact specific groups), privacy considerations, and security considerations.
		\item The conference expects that many papers will be foundational research and not tied to particular applications, let alone deployments. However, if there is a direct path to any negative applications, the authors should point it out. For example, it is legitimate to point out that an improvement in the quality of generative models could be used to generate deepfakes for disinformation. On the other hand, it is not needed to point out that a generic algorithm for optimizing neural networks could enable people to train models that generate Deepfakes faster.
		\item The authors should consider possible harms that could arise when the technology is being used as intended and functioning correctly, harms that could arise when the technology is being used as intended but gives incorrect results, and harms following from (intentional or unintentional) misuse of the technology.
		\item If there are negative societal impacts, the authors could also discuss possible mitigation strategies (e.g., gated release of models, providing defenses in addition to attacks, mechanisms for monitoring misuse, mechanisms to monitor how a system learns from feedback over time, improving the efficiency and accessibility of ML).
	\end{itemize}
	
	\item {\bf Safeguards}
	\item[] Question: Does the paper describe safeguards that have been put in place for responsible release of data or models that have a high risk for misuse (e.g., pretrained language models, image generators, or scraped datasets)?
	\item[] Answer: \answerNA{} 
	\item[] Justification: the paper poses no such risks. 
	\item[] Guidelines:
	\begin{itemize}
		\item The answer NA means that the paper poses no such risks.
		\item Released models that have a high risk for misuse or dual-use should be released with necessary safeguards to allow for controlled use of the model, for example by requiring that users adhere to usage guidelines or restrictions to access the model or implementing safety filters. 
		\item Datasets that have been scraped from the Internet could pose safety risks. The authors should describe how they avoided releasing unsafe images.
		\item We recognize that providing effective safeguards is challenging, and many papers do not require this, but we encourage authors to take this into account and make a best faith effort.
	\end{itemize}
	
	\item {\bf Licenses for existing assets}
	\item[] Question: Are the creators or original owners of assets (e.g., code, data, models), used in the paper, properly credited and are the license and terms of use explicitly mentioned and properly respected?
	\item[] Answer: \answerNA{} 
	\item[] Justification: The paper does not use existing assets. 
	\item[] Guidelines:
	\begin{itemize}
		\item The answer NA means that the paper does not use existing assets.
		\item The authors should cite the original paper that produced the code package or dataset.
		\item The authors should state which version of the asset is used and, if possible, include a URL.
		\item The name of the license (e.g., CC-BY 4.0) should be included for each asset.
		\item For scraped data from a particular source (e.g., website), the copyright and terms of service of that source should be provided.
		\item If assets are released, the license, copyright information, and terms of use in the package should be provided. For popular datasets, \url{paperswithcode.com/datasets} has curated licenses for some datasets. Their licensing guide can help determine the license of a dataset.
		\item For existing datasets that are re-packaged, both the original license and the license of the derived asset (if it has changed) should be provided.
		\item If this information is not available online, the authors are encouraged to reach out to the asset's creators.
	\end{itemize}
	
	\item {\bf New Assets}
	\item[] Question: Are new assets introduced in the paper well documented and is the documentation provided alongside the assets?
	\item[] Answer: \answerNA{} 
	\item[] Justification: The paper does not release new assets. 
	\item[] Guidelines:
	\begin{itemize}
		\item The answer NA means that the paper does not release new assets.
		\item Researchers should communicate the details of the dataset/code/model as part of their submissions via structured templates. This includes details about training, license, limitations, etc. 
		\item The paper should discuss whether and how consent was obtained from people whose asset is used.
		\item At submission time, remember to anonymize your assets (if applicable). You can either create an anonymized URL or include an anonymized zip file.
	\end{itemize}
	
	\item {\bf Crowdsourcing and Research with Human Subjects}
	\item[] Question: For crowdsourcing experiments and research with human subjects, does the paper include the full text of instructions given to participants and screenshots, if applicable, as well as details about compensation (if any)? 
	\item[] Answer: \answerNA{} 
	\item[] Justification: The paper did not conduct crowdsourcing nor research with human subjects. 
	\item[] Guidelines:
	\begin{itemize}
		\item The answer NA means that the paper does not involve crowdsourcing nor research with human subjects.
		\item Including this information in the supplemental material is fine, but if the main contribution of the paper involves human subjects, then as much detail as possible should be included in the main paper. 
		\item According to the NeurIPS Code of Ethics, workers involved in data collection, curation, or other labor should be paid at least the minimum wage in the country of the data collector. 
	\end{itemize}
	
	\item {\bf Institutional Review Board (IRB) Approvals or Equivalent for Research with Human Subjects}
	\item[] Question: Does the paper describe potential risks incurred by study participants, whether such risks were disclosed to the subjects, and whether Institutional Review Board (IRB) approvals (or an equivalent approval/review based on the requirements of your country or institution) were obtained?
	\item[] Answer: \answerNA{} 
	\item[] Justification: the paper does not involve crowdsourcing nor research with human subjects. 
	\item[] Guidelines:
	\begin{itemize}
		\item The answer NA means that the paper does not involve crowdsourcing nor research with human subjects.
		\item Depending on the country in which research is conducted, IRB approval (or equivalent) may be required for any human subjects research. If you obtained IRB approval, you should clearly state this in the paper. 
		\item We recognize that the procedures for this may vary significantly between institutions and locations, and we expect authors to adhere to the NeurIPS Code of Ethics and the guidelines for their institution. 
		\item For initial submissions, do not include any information that would break anonymity (if applicable), such as the institution conducting the review.
	\end{itemize}
	
\end{enumerate}
}

\end{document}